\newcommand{\nats}{\mathbb{N}}
\newcommand{\reals}{\mathbb{R}}
\DeclarePairedDelimiter{\norm}{\lVert}{\rVert}
\DeclareMathOperator{\sigmoid}{\sigma}
\DeclareMathOperator{\diag}{diag}
\DeclareMathOperator{\softmax}{\mathrm{softmax}}
\DeclareMathOperator{\relu}{\mathrm{ReLU}}
\DeclareMathOperator{\bigO}{\mathcal{O}}
\renewcommand{\vec}[1]{\boldsymbol{\MakeLowercase{#1}}}
\newcommand{\mat}[1]{\boldsymbol{\MakeUppercase{#1}}}
\DeclareMathAlphabet{\mathsfit}{\encodingdefault}{\sfdefault}{m}{sl}
\SetMathAlphabet{\mathsfit}{bold}{\encodingdefault}{\sfdefault}{bx}{n}
\newcommand{\ten}[1]{\boldsymbol{\mathsfit{\MakeUppercase{#1}}}}
\let\normalised\tilde
\newcommand{\mass}[1]{{#1}}
\newcommand{\aux}[1]{#1}
\newtheorem{corollary}{Corollary}%
\newcommand{\timestep}[1]{^{#1}}
\newtheorem{theorem}{Theorem}
\acrodef{mae}[MAE]{Mean Absolute Error}
\acrodef{mse}[MSE]{Mean Squared Error}
\acrodef{rmse}[RMSE]{Root \acs{mse}}
\acrodef{nlp}[NLP]{Natural Language Processing}
\acrodef{hmm}[HMM]{Hidden Markov Model}
\acrodef{rnn}[RNN]{Recurrent Neural Network}
\acrodef{lstm}[LSTM]{Long Short-Term Memory}
\acrodef{mclstm}[MC-LSTM]{Mass-Conserving \acs{lstm}}
\acrodef{mcfc}[MC-FC]{Mass-Conserving Fully Connected}
\acrodef{nalu}[NALU]{Neural ALU}
\acrodef{nau}[NAU]{Neural Addition Unit}
\acrodef{nac}[NAC]{Neural Accumulator}
\acrodef{nmu}[NMU]{Neural Multiplication Unit}
\acrodef{cnn}[CNN]{Convolutional Neural Network}
\acrodef{lrp}[LRP]{Layer-wise Relevance Propagation}
\acrodef{gru}[GRU]{Gated Recurrent Units}
\acrodef{mac}[MAC]{Multiply-Accumulate operation}
\acrodef{hnn}[HNN]{Hamiltonian Neural Network}
\acrodef{gdl}[GDL]{Geometric Deep Learning}
\icmltitlerunning{\acs{mclstm}}
\begin{document}

\twocolumn[
\icmltitle{\acs{mclstm}: \acl{mclstm}}



\icmlsetsymbol{equal}{*}

\begin{icmlauthorlist}
\icmlauthor{Pieter-Jan Hoedt}{equal,jku}
\icmlauthor{Frederik Kratzert}{equal,jku}
\icmlauthor{Daniel Klotz}{jku}
\icmlauthor{Christina Halmich}{jku}
\icmlauthor{Markus Holzleitner}{jku}
\icmlauthor{Grey Nearing}{goo}
\icmlauthor{Sepp Hochreiter}{jku,iarai}
\icmlauthor{Günter Klambauer}{jku}
\end{icmlauthorlist}

\icmlaffiliation{jku}{ELLIS Unit Linz, LIT AI Lab, Institute for Machine Learning, Johannes Kepler University Linz, Austria}
\icmlaffiliation{goo}{Google Research, Mountain View, CA, USA}
\icmlaffiliation{iarai}{Institute of Advanced Research in Artificial Intelligence ({IARAI})}

\icmlcorrespondingauthor{Pieter-Jan Hoedt}{hoedt@ml.jku.at}
\icmlcorrespondingauthor{Frederik Kratzert}{kratzert@ml.jku.at}

\icmlkeywords{Deep Learning, LSTM, RNN, inductive bias, mass-conservation, neural arithmetic units, hydrology}

\vskip 0.3in
]



\printAffiliationsAndNotice{\icmlEqualContribution} 


\begin{abstract}
    The success of \acp{cnn} in computer vision is mainly driven by their strong inductive bias,
    which is strong enough to allow \acp{cnn} to solve vision-related tasks with random weights, meaning without learning.
    Similarly, \ac{lstm} has a strong inductive bias toward storing information over time.
    However, many real-world systems are governed by conservation laws, which lead to the redistribution of particular quantities --- e.g.\ in physical and economical systems.
    Our novel \ac{mclstm} adheres to these conservation laws by extending the inductive bias of \ac{lstm} to model the redistribution of those stored quantities.
    \acp{mclstm} set a new state-of-the-art for neural arithmetic units at learning arithmetic operations, such as addition tasks, which have a strong conservation law, as the sum is constant over time.
    Further, \ac{mclstm} is applied to traffic forecasting, modeling a damped pendulum, and a large benchmark dataset in hydrology, where it sets a new state-of-the-art for predicting peak flows. 
    In the hydrology example, we show that \ac{mclstm} states correlate with real world processes and are therefore interpretable.
\end{abstract}

\acresetall

\section{Introduction}
\label{sec:intro}
\paragraph{Inductive biases enabled the success of CNNs and LSTMs.}
One of the greatest success stories of deep learning are \acp{cnn}
\citep{Fukushima:80,LeCun:98,Schmidhuber2015deep, Lecun2015deep},
whose proficiency can be attributed to their strong inductive bias 
toward visual tasks \citep{Cohen:17,Gaier:19nips}.
The effect of this inductive bias has been demonstrated by
\acp{cnn} that solve vision-related tasks with random weights, meaning without learning \citep{He:16,Gaier:19nips,Ulyanov:20}.
Another success story is 
\ac{lstm} \citep{hochreiter91, hochreiterS97}, which has a strong
inductive bias toward storing information through its memory cells.
This inductive bias allows \ac{lstm} to excel at speech, text, and
language tasks \citep{sutskever2014sequence,bohnet2018morphosyntactic,kochkina2017turing,liu2019bidirectional},
as well as timeseries prediction. 
Even with random weights and only a learned linear output 
layer, \ac{lstm} is better at predicting timeseries 
than reservoir methods \citep{Schmidhuber:07}.
In a seminal paper on biases in machine learning, \citet{Mitchell:80} stated that
\textit{``biases and initial knowledge are at the heart of the ability to generalize
beyond observed data''}.
Therefore, choosing an appropriate architecture and inductive bias 
for neural networks is key to generalization.

\paragraph{Mechanisms beyond storing are required for real-world applications.}
While \ac{lstm} can store information over time, real-world applications 
require mechanisms that go beyond storing.
Many real-world systems are governed by conservation laws related to 
mass, energy, momentum, charge, or particle counts, which
are often expressed through continuity equations.
In physical systems, different types of energies, mass or particles have to be conserved
\citep{evans2005nonequilibrium,rabitz1999efficient, vanderschaft1996mathematical}, 
in hydrology it is the amount of water
\citep{freeze1969blueprint, beven2011rainfall}, 
in traffic and transportation the number of vehicles
\citep{vanajakshi2004loop,xiao2020new,zhao2017lstm}, 
and in logistics the amount of goods, money or 
products.
A real-world task could be to predict outgoing goods from a warehouse based on a general state 
of the warehouse, i.e., how many goods are in storage, and incoming supplies.
If the predictions are not precise, then they do not lead
to an optimal control of the production process.
For modeling such systems, certain inputs must be conserved
but also redistributed across storage locations within the system.
We will refer to conserved inputs as \emph{mass}, but note that 
this can be any type of conserved quantity.
We argue that for modeling such systems, 
specialized mechanisms should be used to represent 
locations \& whereabouts, objects, or storage \& 
placing locations and thus enable conservation.

\paragraph{Conservation laws should pervade machine learning models in 
the physical world.} 
Since a large part of machine learning models are developed 
to be deployed in the real 
world, in which conservation laws are omnipresent rather than the exception,
these models should adhere to them automatically and benefit from them. 
However, standard deep learning approaches struggle at conserving
quantities across layers or timesteps 
\citep{beucler2019achieving,greydanusDY19,song1996solving,yitian2003modeling},
and often solve a task by exploiting spurious correlations \citep{szegedy2013intriguing,lapuschkin2019unmasking}.
Thus, an inductive bias of deep learning approaches 
via mass conservation over time
in an open system, where mass can be added and removed, could lead to 
a higher generalization performance than standard deep learning for
the above-mentioned tasks.




\paragraph{A mass-conserving \ac{lstm}.} 
In this work, we introduce \ac{mclstm}, a variant of \ac{lstm} that enforces mass conservation by design. 
\ac{mclstm} is a recurrent neural network with an architecture 
inspired by the gating mechanism in \acp{lstm}. \ac{mclstm} has a
strong inductive bias to guarantee the conservation of mass.
This conservation is implemented by means of left-stochastic matrices, which ensure the sum 
of the memory cells in the network represents the current mass in the system. 
These left-stochastic matrices also enforce the mass to be conserved through time.
The \ac{mclstm} gates operate as control units on mass flux.
Inputs are divided into a subset of \emph{mass inputs}, 
which are propagated through time and are conserved,
and a subset of \emph{auxiliary inputs}, 
which serve as inputs to the gates for controlling mass fluxes.
We demonstrate that \acp{mclstm} excel at tasks where 
conservation of mass is required and that it is highly apt at solving
real-world problems in the physical domain. 

\paragraph{Contributions.} 
We propose a novel neural network architecture based on \ac{lstm} that conserves quantities, such as mass, energy, or count, of a specified set of inputs. 
We show properties of this novel architecture, called \ac{mclstm}, and demonstrate that these properties render it a powerful neural arithmetic unit.
Further, we show its applicability in real-world areas of traffic forecasting and modeling the damped pendulum. 
In hydrology, large-scale benchmark experiments reveal that \ac{mclstm} has powerful predictive quality and can supply interpretable representations. 

\section{\acl{mclstm}}
\label{sec:model}

The original \ac{lstm}
introduced memory cells to \acp{rnn}, which alleviate 
the vanishing gradient problem \citep{hochreiter91}. 
This is achieved by means of a fixed recurrent self-connection of the memory cells. 
If we denote the values in the memory cells at time $t$ by $\vec{c}
   \timestep{t}$, this recurrence can be formulated as 
\begin{equation}
    \label{eq:cell_accumulation}
    \vec{c}\timestep{t} = \vec{c}\timestep{t-1} + f(\vec{x}\timestep{t}, \vec{h}\timestep{t-1}),
\end{equation}
where $\vec{x}$ and $\vec{h}$ are, respectively, the forward inputs and 
recurrent inputs, and $f$ is some function that computes the increment for the memory cells.
Here, we used the original formulation of \ac{lstm} without forget gate \citep{hochreiterS97}, but in 
all experiments we also consider \ac{lstm} with forget gate \citep{gersSC00}.

\acp{mclstm} modify this recurrence to guarantee the conservation 
of the mass input.
The key idea is to use the memory cells from \acp{lstm} as mass accumulators, or mass storage. 
The conservation law is implemented by three architectural changes. 
First, 
the increment, computed by $f$ in Eq.~\eqref{eq:cell_accumulation}, has to 
distribute mass from inputs into accumulators. 
Second, the mass that leaves \ac{mclstm} must also disappear from the 
accumulators. 
Third, mass has to be redistributed between mass accumulators.
These changes mean that all gates explicitly represent mass fluxes.

 \begin{figure}
    \centering
    \includegraphics[width=\linewidth]{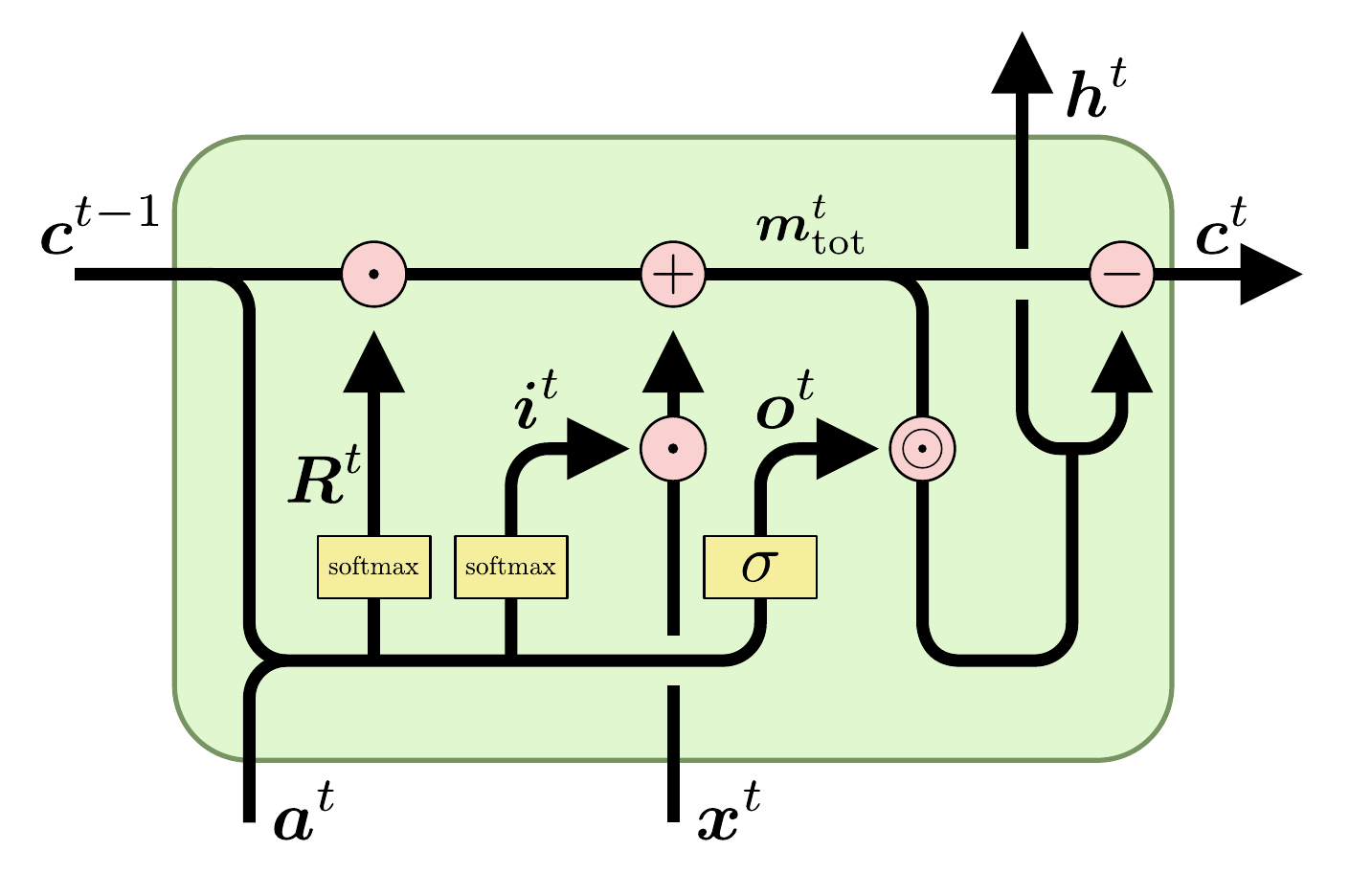}
    \caption{Schematic representation of the main operations in the 
    \ac{mclstm} architecture \citep[adapted from:][]{colah15}.}
\end{figure}
 
Since, in general, not all inputs must be conserved, we distinguish 
between \emph{mass} inputs, $\mass{\vec{x}}$, and \emph{auxiliary} 
inputs, $\aux{\vec{a}}$. The former represents the quantity to be conserved
and will fill the mass accumulators in \ac{mclstm}. The auxiliary inputs
are used to control the gates. To keep the notation uncluttered, and without loss of generality, 
we use a single mass input at each timestep, $\mass{x}\timestep{t}$, to introduce the architecture.

The forward pass of \ac{mclstm} at timestep $t$ can be specified as follows:
\begin{align}
    \label{eq:mclstm:mass}
    \vec{m}_\mathrm{tot}\timestep{t}  &= \mat{R}\timestep{t} \cdot \vec{c}\timestep{t-1} + \vec{i}\timestep{t} \cdot \mass{x}\timestep{t} \\
    \label{eq:mclstm:cell_update}
    \vec{c}\timestep{t} &= (\vec{1} - \vec{o}\timestep{t}) \odot \vec{m}_\mathrm{tot}\timestep{t}  \\
    \label{eq:mclstm:out_update}
    \vec{h}\timestep{t} &= \vec{o}\timestep{t} \odot \vec{m}_\mathrm{tot}\timestep{t},
\end{align}

where $\vec{i}\timestep{t}$ and $\vec{o}\timestep{t}$ are the input- and 
output gates, respectively,  and $\mat{R}$ is a 
positive left-stochastic matrix, i.e., $\vec{1}^T \cdot \mat{R} = \vec{1}^T$, 
for redistributing mass in the accumulators. 
The \emph{total mass} $\vec{m}_\mathrm{tot}$ is 
the \emph{redistributed mass}, $\mat{R}\timestep{t} \cdot \vec{c}\timestep{t-1}$, plus 
the \emph{mass influx}, or new mass, $\vec{i}\timestep{t} \cdot \mass{x}\timestep{t}$.
The current mass in the system is stored in $\vec{c}\timestep{t}$.
Finally, $\vec{h}\timestep{t}$ is the mass leaving the system.

Note the differences between Eq.~\eqref{eq:cell_accumulation} 
and Eq.~\eqref{eq:mclstm:cell_update}. First, the increment of the memory cells no
longer depends on $\vec{h}\timestep{t}$. Instead, mass inputs are 
distributed by means of the normalized 
$\vec{i}$ (see Eq.~\ref{eq:mclstm:in_gate}). Furthermore, $\mat{R}\timestep{t}$ 
replaces the implicit identity matrix of \ac{lstm} to redistribute mass among memory cells.
Finally, Eq.~\eqref{eq:mclstm:cell_update} 
introduces $\vec{1} - \vec{o}\timestep{t}$ as a forget gate on the total mass, $\vec{m}_\mathrm{tot}$. 
Together with Eq.~\eqref{eq:mclstm:out_update}, this assures that no outgoing mass is stored in the accumulators.
This formulation has some similarity to \ac{gru} \citep{cho2014learning}, 
however \ac{mclstm} gates are used to split off the output instead of mixing 
the old and new cell state.

\paragraph{Basic gating and redistribution.} 
The \ac{mclstm} gates at timestep $t$ are computed as follows:
\begin{align}
    \label{eq:mclstm:in_gate}
    \vec{i}\timestep{t} &= \softmax (\mat{W}_\mathrm{i} \cdot \aux{\vec{a}}\timestep{t} + \mat{U}_\mathrm{i} \cdot \frac{\vec{c}\timestep{t-1}}{\norm{\vec{c}\timestep{t-1}}_1} + \vec{b}_\mathrm{i}) \\
    \label{eq:mclstm:out_gate}
    \vec{o}\timestep{t} &= \sigmoid(\mat{W}_\mathrm{o} \cdot \aux{\vec{a}}\timestep{t} + \mat{U}_\mathrm{o} \cdot \frac{\vec{c}\timestep{t-1}}{\norm{\vec{c}\timestep{t-1}}_1} + \vec{b}_\mathrm{o}) \\
    \label{eq:mclstm:redistribution}
    \mat{R}\timestep{t} &= \softmax(\mat{B}_\mathrm{r}  ),
\end{align}
where 
the $\softmax$ operator is applied column-wise, 
$\sigma$ is the logistic sigmoid function, and
$\mat{W}_\mathrm{i}$, $\vec{b}_\mathrm{i}$,
$\mat{W}_\mathrm{o}$, $\vec{b}_\mathrm{o}$, and $\mat{B}_\mathrm{r}$
are learnable model parameters. 
The normalization of the input gate and redistribution is required to obtain mass conservation.
Note that this can also be achieved by other means than using
the softmax function. For example, an alternative way 
to ensure a column-normalized matrix $\mat{R}\timestep{t}$ 
is to use a normalized logistic, 
$\normalised{\sigmoid}(r_{kj}) = \frac{\sigmoid(r_{kj})}{\sum_n \sigmoid(r_{kn})}$.
Also note that \acp{mclstm} directly compute the gates from the memory cells.
This is in contrast with the original \ac{lstm}, which uses the activations from the previous time step.
In this sense, \ac{mclstm} relies on peephole connections \citep{gersS00}, instead of the activations from the previous timestep for computing the gates. 
The accumulated values from the memory cells, $\vec{c}\timestep{t}$, are normalized to counter saturation of the sigmoids and to supply probability vectors 
that represent the current distribution of the mass across cell states.
We use this variation e.g.\ in our experiments 
with \emph{neural arithmetics} (see Sec.~\ref{sec:experiments:arithmetic}). 

\paragraph{Time-dependent redistribution.} 
It can also be useful to predict a redistribution matrix 
for each sample and timestep, similar to how the gates are computed:
\begin{align}
    \label{eq:R}
    \mat{R}\timestep{t} = \softmax \left( \ten{W}_\mathrm{r} \cdot \aux{\vec{a}}\timestep{t} + \ten{U}_\mathrm{r} \cdot \frac{\vec{c}\timestep{t-1}}{\norm{\vec{c}\timestep{t-1}}_1}+ \mat{B}_\mathrm{r} \right),
\end{align}
where the parameters 
$\ten{W}_\mathrm{r}$ and $\ten{U}_\mathrm{r}$ are weight tensors 
and their multiplications result in $K \times K$ matrices.
Again, the $\softmax$ function is applied column-wise.
This version collapses to a time-independent redistribution matrix
if $\ten{W}_\mathrm{r}$ and $\ten{U}_\mathrm{r}$ are equal to $\mat{0}$. Thus, there exists the option to initialize $\ten{W}_\mathrm{r}$ and $\ten{U}_\mathrm{r}$ with weights that are small in absolute value compared to the weights of $\mat{B}_\mathrm{r}$, to favour learning time-independent redistribution matrices. We use this variant in the hydrology experiments (see Sec.~\ref{sec:hydrology}).

\textbf{Redistribution via a hypernetwork.} 
Even more general, a hypernetwork \citep{schmidhuber1992learning, haDL17} that
we denote with $g$ can be used to procure $\mat{R}$.
The hypernetwork has to produce a column-normalized, square matrix $\mat{R}\timestep{t} = g(\aux{\vec{a}}\timestep{0},\ldots,\aux{\vec{a}}\timestep{t}, \vec{c}\timestep{0}, \ldots, \vec{c}\timestep{t-1})$.
Notably, a hypernetwork can be used to design an \emph{autoregressive} version of \acp{mclstm}, if the network additionally predicts auxiliary inputs for the next time step.
We use this variant in the pendulum experiments (see Sec.~\ref{sec:pedulum}).

%

\section{Properties}
\label{sec:theory}

\paragraph{Conservation.}
\ac{mclstm} guarantees that mass is conserved over time. 
This is a direct consequence 
of connecting memory cells with stochastic matrices. 
The mass conservation ensures 
that no mass can be removed or added implicitly, which makes 
it easier to learn functions that generalize well. The exact meaning 
of mass conservation is formalized in the following Theorem. 

\begin{theorem}[Conservation property]
    \label{thm:conservation}
    Let $m_c\timestep{\tau} = \sum_{k=1}^K c_k\timestep{\tau}$ be the
    mass contained in the system
    and $m_h\timestep{\tau} = \sum_{k=1}^K h_k\timestep{\tau}$ be 
    the mass efflux, or, respectively, the \emph{accumulated mass} in the \ac{mclstm} 
    storage and the outputs at time $\tau$. At any timestep $\tau$, we have:
    \begin{equation}
        \label{eq:massconv}
        m_c\timestep{\tau} = m_c\timestep{0} + \sum_{t = 1}^\tau \mass{x}\timestep{t}- \sum_{t = 1}^\tau m_h\timestep{t}.
    \end{equation}
    That is, the change of mass in the memory cells is the 
    difference between the input and output mass, accumulated over time.
\end{theorem}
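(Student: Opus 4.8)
The plan is to reduce the claimed accumulated identity to a single-timestep mass balance and then telescope over time. First I would observe that the cell update and output equations, Eqs.~\eqref{eq:mclstm:cell_update}--\eqref{eq:mclstm:out_update}, split the total mass $\vec{m}_\mathrm{tot}\timestep{t}$ into a retained part and an efflux part without any loss:
\begin{equation}
    \vec{c}\timestep{t} + \vec{h}\timestep{t} = (\vec{1} - \vec{o}\timestep{t}) \odot \vec{m}_\mathrm{tot}\timestep{t} + \vec{o}\timestep{t} \odot \vec{m}_\mathrm{tot}\timestep{t} = \vec{m}_\mathrm{tot}\timestep{t}.
\end{equation}
Summing over the $K$ cell indices, i.e.\ left-multiplying by $\vec{1}^T$, then gives $m_c\timestep{t} + m_h\timestep{t} = \vec{1}^T \vec{m}_\mathrm{tot}\timestep{t}$.

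The key step is to evaluate $\vec{1}^T \vec{m}_\mathrm{tot}\timestep{t}$ using Eq.~\eqref{eq:mclstm:mass} together with the two normalization properties built into the architecture. Because $\mat{R}\timestep{t}$ is left-stochastic, $\vec{1}^T \mat{R}\timestep{t} = \vec{1}^T$, so the redistribution term contributes $\vec{1}^T \mat{R}\timestep{t} \vec{c}\timestep{t-1} = \vec{1}^T \vec{c}\timestep{t-1} = m_c\timestep{t-1}$; because the input gate is produced by a column-wise $\softmax$, $\vec{1}^T \vec{i}\timestep{t} = 1$, so the influx term contributes $\vec{1}^T \vec{i}\timestep{t} \mass{x}\timestep{t} = \mass{x}\timestep{t}$. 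Combining these yields the one-step balance
\begin{equation}
    m_c\timestep{t} = m_c\timestep{t-1} + \mass{x}\timestep{t} - m_h\timestep{t}.
\end{equation}

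Finally I would sum this recurrence over $t = 1, \ldots, \tau$. The $m_c\timestep{t}$ terms telescope, leaving exactly Eq.~\eqref{eq:massconv}; a one-line induction on $\tau$ with the one-step balance as the inductive step would serve equally well. There is no genuine obstacle here: the entire content of the theorem is packed into the single-timestep balance, and that in turn rests solely on the two invariants $\vec{1}^T \mat{R}\timestep{t} = \vec{1}^T$ and $\vec{1}^T \vec{i}\timestep{t} = 1$. The only thing to be careful about is to invoke both normalizations explicitly --- dropping either would break conservation --- and to note that the output gate $\vec{o}\timestep{t}$ need not be constrained at all, since the lossless split $\vec{c}\timestep{t} + \vec{h}\timestep{t} = \vec{m}_\mathrm{tot}\timestep{t}$ holds for any $\vec{o}\timestep{t}$.
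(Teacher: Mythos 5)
Your proposal is correct and follows essentially the same route as the paper's proof: both reduce everything to the one-step balance $m_c\timestep{t} = m_c\timestep{t-1} + \mass{x}\timestep{t} - m_h\timestep{t}$, obtained by summing over cells and invoking $\vec{1}^T \mat{R}\timestep{t} = \vec{1}^T$ and $\vec{1}^T \vec{i}\timestep{t} = 1$, and then accumulate over time (the paper by explicit induction, you by a telescoping sum, which you correctly note are interchangeable here). The lossless split $\vec{c}\timestep{t} + \vec{h}\timestep{t} = \vec{m}_\mathrm{tot}\timestep{t}$ is exactly the paper's step of writing $\sum_k (1-o_k)\, m_{\mathrm{tot},k} = \sum_k m_{\mathrm{tot},k} - m_h\timestep{t}$, so there is no substantive difference.
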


The proof is by induction over $\tau$ (see Appendix~\ref{app:proof}). 
Note that it is still possible for input mass to be stored indefinitely in a 
memory cell so that it does not appear at the output. This can be a 
useful feature if not all of the input mass is needed at the output. In this 
case, the network can learn that one cell should operate as a collector for
excess mass in the system. 

\paragraph{Boundedness of cell states.} 
In each timestep $\tau$, the memory cells, $c_k\timestep{\tau}$,
are bounded by the sum of mass inputs $\sum_{t=1}^\tau \mass{x}\timestep{t} + m_c\timestep{0}$, that 
is $|c_k\timestep{\tau}| \leq \sum_{t=1}^\tau \mass{x}\timestep{t} + m_c\timestep{0}$. Furthermore, 
if the series of mass inputs converges, $\lim_{\tau \rightarrow \infty} \sum_{t=1}^\tau \mass{x}\timestep{\tau} = m_x^\infty$, then also the sum of cell states converges (see Appendix, Corollary~\ref{cl:bound}).

\paragraph{Initialization and gradient flow.}  
\ac{mclstm} with $\mat{R}\timestep{t}=\mat{I}$ has a similar
gradient flow to \ac{lstm} with forget gate \citep{gersSC00}.
Thus, the main difference in the gradient flow is determined
by the redistribution matrix $\mat{R}$. The forward pass of
\ac{mclstm} without gates
$\vec{c}\timestep{t}=\mat{R}\timestep{t} \vec{c}\timestep{t-1}$
leads to the following backward expression
$\frac{\partial \vec{c}\timestep{t}}{\partial \vec{c}\timestep{t-1}} = \mat{R}\timestep{t}$.
Hence, \ac{mclstm} should be initialized
with a redistribution matrix close to the identity matrix to ensure
a stable gradient flow as in \acp{lstm}. 
For random redistribution matrices, 
the \emph{circular law theorem for random Markov matrices} \citep{bordenave10circular} can be used to 
analyze the gradient flow in more detail, see Appendix, Section~\ref{sec:circular}.

%

\paragraph{Computational complexity.}
Whereas the gates in a traditional \ac{lstm} are vectors, the input gate and redistribution matrix of an \ac{mclstm} are matrices in the most general case. This means that \ac{mclstm} is, in general, computationally more demanding than \ac{lstm}. Concretely, the forward pass for a single timestep in \ac{mclstm} requires $\bigO(K^3 + K^2 (M + L) + K M L)$~\acp{mac}, whereas \ac{lstm} takes $\bigO(K^2 + K (M + L))$~\acp{mac} per timestep. Here, $M$, $L$ and $K$ are the number of mass inputs, auxiliary inputs and outputs, respectively. When using a time-independent redistribution matrix cf. Eq.~\eqref{eq:mclstm:redistribution}, the complexity reduces to $\bigO(K^2 M + K M L)$~\acp{mac}. An empirical runtime comparison is provided in appendix~\ref{app:runtime}.

\paragraph{Potential interpretability through inductive bias and accessible mass in cell states.}
The representations within the model can be interpreted 
directly as accumulated mass. If one mass 
or energy quantity is known, the \ac{mclstm}  architecture would allow
to force a particular cell state to represent this quantity, which
could facilitate learning and interpretability.
An illustrative example is the case of rainfall runoff modelling, 
where observations, say of the soil moisture or groundwater-state, could be used to guide the learning of an explicit memory cell of \ac{mclstm}.

\section{Special Cases and Related Work}
\label{sec:related}
\paragraph{Relation to Markov chains.} 
In a special case \ac{mclstm} collapses to a \emph{finite Markov chain}, 
when $\vec{c}\timestep{0}$ is a probability vector, the mass input is zero $\mass{x}\timestep{t}=0$ for all $t$,
there is no input and output gate, and the redistribution matrix is constant 
over time $\mat{R}\timestep{t}=\mat{R}$. For finite Markov chains, the dynamics are known to 
converge, if $\mat{R}$ is irreducible (see e.g. \citet[Theorem 3.13.]{Hairer:18}). 
\citet{awiszusR18} aim to model a Markov Chain by having a feed-forward network predict 
the next state distribution given the current state distribution. In order to insert 
randomness to the network, a random seed is appended to the input, which allows to 
simulate Markov processes. Although \acp{mclstm} are closely related to Markov chains, 
they do not explicitly learn the transition matrix, as is the case for Markov chain 
neural networks. \acp{mclstm} would have to learn the transition matrix implicitly.

    \begin{table*}
        \caption{Performance of different models on the \ac{lstm} addition task in terms of the \ac{mse}. \ac{mclstm} significantly (all $p$-values below $.05$) outperforms its competitors, \ac{lstm} (with high initial forget gate bias), \ac{nalu} and \ac{nau}. Error bars represent 95\%-confidence intervals across 100 runs.}
        \label{tab:lstm_addition}
        
        \begin{center}\begin{threeparttable}
            \begin{tabular}{lr@{$\ \pm\ $}lr@{$\ \pm\ $}lr@{$\ \pm\ $}lr@{$\ \pm\ $}lr@{$\ \pm\ $}lr}
                \toprule
                & \multicolumn{2}{c}{reference\tnote{a}} & \multicolumn{2}{c}{seq length\tnote{b}} & \multicolumn{2}{c}{input range\tnote{c}} & \multicolumn{2}{c}{count\tnote{d}} & \multicolumn{2}{c}{combo\tnote{e}} & \texttt{NaN}\tnote{f} \\
                \midrule
                \acs{mclstm} & \textbf{0.004} & 0.003 & \textbf{0.009} & 0.004 & \textbf{0.8} & 0.5 & \textbf{0.6} & 0.4 & \textbf{4.0} & 2.5 & 0 \\
                \acs{lstm} & 0.008 & 0.003 & 0.727 & 0.169 & 21.4 & 0.6 & 9.5 & 0.6 & 54.6 & 1.0 & 0 \\
                \acs{nalu} & 0.060 & 0.008 & 0.059 & 0.009 & 25.3 & 0.2 & 7.4 & 0.1 & 63.7 & 0.6 & 93 \\
                \acs{nau} & 0.248 & 0.019 & 0.252 & 0.020 & 28.3 & 0.5 & 9.1 & 0.2 & 68.5 & 0.8 & 24 \\
                \bottomrule
            \end{tabular}
            \begin{tablenotes}
                \item [a] training regime: \hfill summing 2 out of 100 numbers between 0 and 0.5.
                \item [b] longer sequence lengths: \hfill summing 2 out of 1\,000 numbers between 0 and 0.5.
                \item [c] more \emph{mass} in the input: \hfill summing 2 out of 100 numbers between 0 and 5.0.
                \item [d] higher number of summands: \hfill summing 20 out of 100 numbers between 0 and 0.5.
                \item [e] combination of previous scenarios: \hfill summing 10 out of 500 numbers between 0 and 2.5.
                \item [f] Number of runs that did not converge.
            \end{tablenotes}
        \end{threeparttable}\end{center}
    \end{table*}

\paragraph{Relation to normalizing flows and volume-conserving neural networks.} 
In contrast to \emph{normalizing flows} \citep{rezende2015variational,papamakarios2019normalizing}, 
which transform inputs in each layer
and trace their density through layers or timesteps, \acp{mclstm} transform 
distributions and do not aim to trace individual inputs through timesteps.
Normalizing flows thereby conserve information about the input in the first layer 
and can use the inverted mapping to trace an input back to the initial space. 
\acp{mclstm} are concerned with modeling the changes of the initial distribution 
over time and can guarantee that a multinomial distribution is mapped to a multinomial distribution. 
For \acp{mclstm} without gates, 
the sequence of cell states $\vec{c}\timestep{0}, \ldots, \vec{c}\timestep{T}$ 
constitutes a \emph{normalizing flow} if an initial distribution $p_{0}(\vec{c}\timestep{0})$ is available.  
In more detail, \ac{mclstm} can be considered a \emph{linear flow} with 
the mapping $\vec{c}\timestep{t+1}= \mat{R}\timestep{t}\vec{c}\timestep{t}$ and 
$p(\vec{c}\timestep{t+1}) = p(\vec{c}\timestep{t}) |\det \mat{R}\timestep{t}|^{-1}$ in this case. The gate providing the redistribution 
matrix (see Eq.~\ref{eq:R}) is the \emph{conditioner}
in a normalizing flow model. From the perspective of normalizing flows, 
\ac{mclstm} can be considered as a flow trained in a supervised fashion. 
\citet{decoB95} proposed volume-conserving neural networks, which conserve 
the volume spanned by input vectors and thus the information of the starting point of an input is kept. 
In other words, they are constructed so that the Jacobians of the mapping from one layer 
to the next have a determinant of 1.
In contrast, the determinant of the Jacobians in \acp{mclstm} is generally smaller than $1$ (except for  degenerate cases), which 
means that volume of the inputs is not conserved. 

\paragraph{Relation to \acl{lrp}.} 
\ac{lrp} \citep{bach2015pixel} is similar to our approach
with respect to the idea that the sum of a quantity, 
the relevance $\boldsymbol{Q}\timestep{l}$ is conserved over layers $l$. \ac{lrp} aims to maintain the sum of the relevance values
$\sum_{k=1}^K Q_k\timestep{l-1}=\sum_{k=1}^K Q_k\timestep{l}$ 
backward through a classifier in order to a obtain relevance values 
for each input feature.

\paragraph{Relation to other networks that conserve particular properties.}
While a standard feed-forward neural network does not give guarantees aside from 
the conservation of the proximity of datapoints through the continuity property.
The \emph{conservation of the first moments of the data distribution} in the form
of normalization techniques \citep{ioffe2015batch, ba16layer} has had tremendous success. 
Here, batch
normalization  \citep{ioffe2015batch} could exactly conserve mean and variance across layers, whereas self-normalization \citep{klambauer2017self}
conserves those approximately. The \emph{conservation of the spectral norm}
of each layer in the forward pass has enabled the stable training of generative
adversarial networks \citep{miyato2018spectral}.
The \emph{conservation of the spectral norm of the errors} through 
the backward pass of \acp{rnn} has enabled the avoidance of the vanishing gradient problem \citep{hochreiter91, hochreiterS97}.
In this work, we explore an architecture that exactly 
\emph{conserves the mass of a subset of the input}, where mass is 
defined as a physical quantity such as mass or energy. 

Similarly, unitary \acp{rnn} \citep{arjovsky16unitary, wisdom16capacity, jing17tunable, helfrich18cayley} have been used to resolve the vanishing gradient problem.
By using unitary weight matrices, the $L_2$ norm is preserved in both the forward and backward pass.
On the other hand, the redistribution matrix in \acp{mclstm} assures that the $L_1$ norm is preserved in the forward pass.

\paragraph{Relation to geometric deep learning.}
The field of \ac{gdl} aims to provide a unification of inductive biases in representation learning \citep{bronstein21geometric}.
The main tool for this unification is symmetry, which can be expressed in terms of invarant and equivariant functions.
From the perspective of \ac{gdl}, \ac{mclstm} implements an equivariant mapping on the mass inputs w.r.t shift and scale.

\paragraph{Relation to neural networks for physical systems.} 
Neural networks have been shown to discover physical concepts such as the conservation of energies \citep{iten2020discovering}, 
and neural networks could allow to 
learn natural laws from observations \citep{schmidt2009distilling, cranmer2020discovering}.
\ac{mclstm} can be seen as a neural network architecture with 
physical constraints \citep{karpatneAFSBGSSK17, beuclerRPG19}.
It is however also possible to impose conservation laws by using other means, e.g. initialization, constrained optimization or soft constraints \citep[as, for example, proposed by][]{karpatneAFSBGSSK17, beuclerRPG19, beuclerPGOB19, jia2019physics}. 
\acp{hnn} \citep{greydanusDY19} and Symplectic Recurrent Neural Networks \citep{chen2019symplectic} make energy conserving predictions by using the Hamiltonian, 
a function that maps the inputs to the quantity that needs to be conserved. 
By using the symplectic gradients, it is possible to move around in the input space, without changing the output of the Hamiltonian. 
Lagrangian Neural Networks \citep{cranmer2020lagrangian}, extend the Hamiltonian concept by  making it possible to use arbitrary coordinates as inputs.

All of these approaches, while very promising, assume closed physical systems and are thus too restrictive for the application we have in mind.
\citet{raissi2019physics} propose to enforce physical constraints on simple feed-forward 
networks by computing the partial derivatives with respect to the inputs and computing the 
partial differential equations explicitly with the resulting terms. This approach, while 
promising, does require an exact knowledge of the governing equations. By contrast, our 
approach is able to learn its own representation of the underlying process, while 
obeying the pre-specified conservation properties.

\section{Experiments}
\label{sec:experiments}
In the following, we demonstrate the broad applicability and high predictive performance of \ac{mclstm} in settings where mass conservation is required\footnote{Code for the experiments can be found at \url{https://github.com/ml-jku/mc-lstm}}. 
Since there is no quantity to conserve in standard benchmarks for language models, 
we use benchmarks from areas in which a quantity has to be conserved. 
We assess \ac{mclstm} on the benchmarking setting in the area of neural 
arithmetics \citep{traskHRRD18,madsenJ20,heimTV20, faberW21}, in physical modeling on the damped
pendulum modeling task by \citep{iten2020discovering}, and in environmental
modeling on flood forecasting \citep{kratzert2019universal}. Additionally, we 
demonstrate the applicability of \ac{mclstm} to a traffic forecasting setting. 
For more details on the datasets and hyperparameter selection for each 
experiment, we refer to Appendix~\ref{app:details}.

\subsection{Arithmetic Tasks}
\label{sec:experiments:arithmetic}
    
    \paragraph{Addition problem.} We first considered a problem for which exact mass conservation is required. One example for such a problem has been described in the original \ac{lstm} paper \citep{hochreiterS97}, showing that \ac{lstm} is capable of summing two arbitrarily marked elements in a sequence of random numbers. We show that \ac{mclstm} is able to solve this task, but also generalizes better to longer sequences, input values in a different range and more summands. Table~\ref{tab:lstm_addition} summarizes the results of this method comparison and shows that \ac{mclstm} significantly outperformed the other models on all tests ($p$-value $\leq 0.03$, Wilcoxon test). In Appendix~\ref{app:details:arithmetic:analysis}, we provide a qualitative analysis of the learned model behavior for this task.
    
   
    \paragraph{Recurrent arithmetic.}
    Following \citet{madsenJ20}, the inputs for this task are 
    sequences of vectors, uniformly drawn from $[1, 2]^{10}$. 
    For each vector in the sequence, the sum over two random subsets is calculated. 
    Those values are then summed over time, leading to two values. 
    The target output is obtained by applying the arithmetic operation to these two values. 
    The auxiliary input for \ac{mclstm} is 
    a sequence of ones, where the last element is $-1$ to signal the end of the sequence.
    
    We evaluated \ac{mclstm} against \acp{nau} and \acp{nac} directly in the framework of \citet{madsenJ20}. \acp{nac} and \acp{nau} use the architecture as presented in 
    \citep{madsenJ20}. 
    That is, a single hidden layer with two neurons, where the first layer is recurrent. 
    The \ac{mclstm} model has two layers, of which the second one is a fully 
    connected linear layer. 
    For subtraction an extra cell was necessary to properly discard redundant input mass.
    
    \begin{figure}
        \centering
        \includegraphics[width=0.9\linewidth]{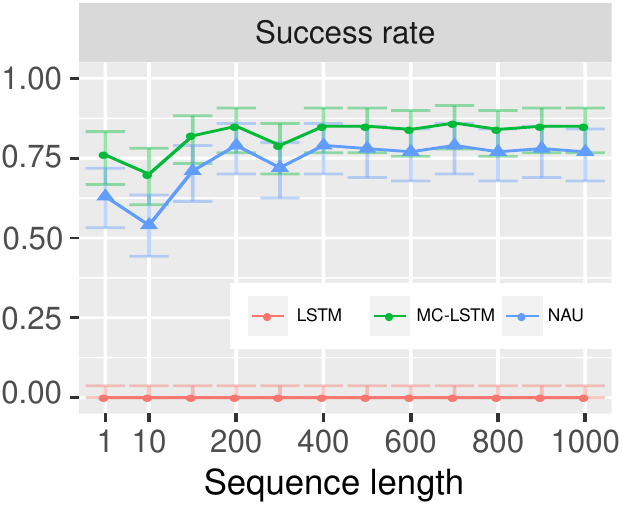}
        \caption{MNIST arithmetic task results for \ac{mclstm} and \ac{nau}. 
        The task is to correctly predict the sum of a sequence of presented
        MNIST digits. 
        The success rates are depicted on the y-axis in dependency
        of the length of the sequence (x-axis) of MNIST digits. 
        Error bars represent 95\%-confidence intervals. 
        \label{fig:seq_mnist}}
    \end{figure}
    
    For testing, the model with the lowest validation error was used, c.f. early stopping. The performance is measured by the percentage of runs that successfully 
    generalized to longer sequences. Generalization is considered successful if the error is 
    lower than the numerical imprecision of the exact operation \citep{madsenJ20}. The summary 
    in Tab.~\ref{tab:arithmeticR} shows that \ac{mclstm} was able to significantly 
    outperform the competing models ($p$-value $0.03$ for
    addition and $3\mathrm{e}{-6}$ for multiplication, proportion test). In Appendix~\ref{app:details:arithmetic:analysis}, we provide a qualitative analysis of the learned model behavior for this task.
    
     \begin{table*}
        \caption{Recurrent arithmetic task results. \acp{mclstm} for addition and subtraction/multiplication have two and three neurons, respectively. Error bars represent 95\%-confidence intervals.}
        \label{tab:arithmeticR}
        \begin{center}\begin{threeparttable}
            \begin{tabular}{rcccccc}
                \toprule
                & \multicolumn{2}{c}{addition} & \multicolumn{2}{c}{subtraction} & \multicolumn{2}{c}{multiplication} \\
                \cmidrule{2-3} \cmidrule{4-5} \cmidrule{6-7}
                & success rate\tnote{a} & updates\tnote{b} & success rate\tnote{a} & updates\tnote{b} & success rate\tnote{a} & updates\tnote{b} \\
                \midrule
                \acs{mclstm} & $\mathbf{96\%}~^{+2\%}_{-6\%}$ & $4.6 \cdot 10^5$ & 
                $\mathbf{81\%}~^{+6\%}_{-9\%}$ & $1.2 \cdot 10^5$ & $\mathbf{67\%}~^{+8\%}_{-10\%}$ & $1.8 \cdot 10^5$ \\
                \acs{lstm} & $0\%~^{+4\%}_{-0\%}$ & -- & 
                $0\%~^{+4\%}_{-0\%}$ & -- & $0\%~^{+4\%}_{-0\%}$ & -- \\
                \acs{nau} / \acs{nmu} & $88\%~^{+5\%}_{-8\%}$ & $8.1 \cdot 10^4$ &
                $60\%~^{+9\%}_{-10\%}$ & $ 6.1 \cdot 10^4$ & $34\%~^{+10\%}_{-9\%}$ & $8.5 \cdot 10^4$\\
                \acs{nac} & $56\%~^{+9\%}_{-10\%}$ & $3.2 \cdot 10^5$ & $\mathbf{86\%} {~}^{+5\%}_{-8\%}$ & $4.5 \cdot 10^4$ & $0\%~^{+4\%}_{-0\%}$ & -- \\
                \acs{nalu} & $10\% {~}^{+7\%}_{-4\%}$ & $1.0 \cdot 10^{6}$ & $0\%~^{+4\%}_{-0\%}$ & -- & $1\%~^{+4\%}_{-1\%}$ & $4.3 \cdot 10^5$ \\
                \bottomrule
            \end{tabular}
            \begin{tablenotes}
                \item [a] Percentage of runs that generalized to longer sequences.
                \item [b] Median number of updates necessary to solve the task.
            \end{tablenotes}
        \end{threeparttable}\end{center}
    \end{table*}
    
    \paragraph{Static arithmetic.} 
    To enable a direct comparison with the 
    results reported in \citet{madsenJ20}, we also compared a feed-forward variant of \ac{mclstm} on the static arithmetic task, 
    see Appendix~\ref{app:details:arithmetic:static}.

    \paragraph{MNIST arithmetic.}
    We tested that feature extractors can be learned from MNIST images 
    \citep{lecunBBH98} to perform arithmetic on the images \citep{madsenJ20}.
    This is especially of interest if mass inputs are not given directly, but can be extracted from the available data.
    The input is a sequence of MNIST images and the target output is the corresponding sum of the labels. 
    Auxiliary inputs are all $1$, except the last entry, which is $-1$, to indicate the end of the sequence.
    The models are the same as in the recurrent arithmetic task with a \ac{cnn} to 
    convert the images to (mass) inputs for these networks.
    The network is learned end-to-end. 
    $L_2$-regularization is added to the output of the \ac{cnn} to prevent its outputs from growing arbitrarily large.
    The results for this experiment are depicted in Fig.~\ref{fig:seq_mnist}. \ac{mclstm} significantly outperforms the state-of-the-art, \ac{nau} ($p$-value $0.002$, Binomial test).
    
    \begin{figure}
        \centering
        \includegraphics[width=0.9\linewidth]{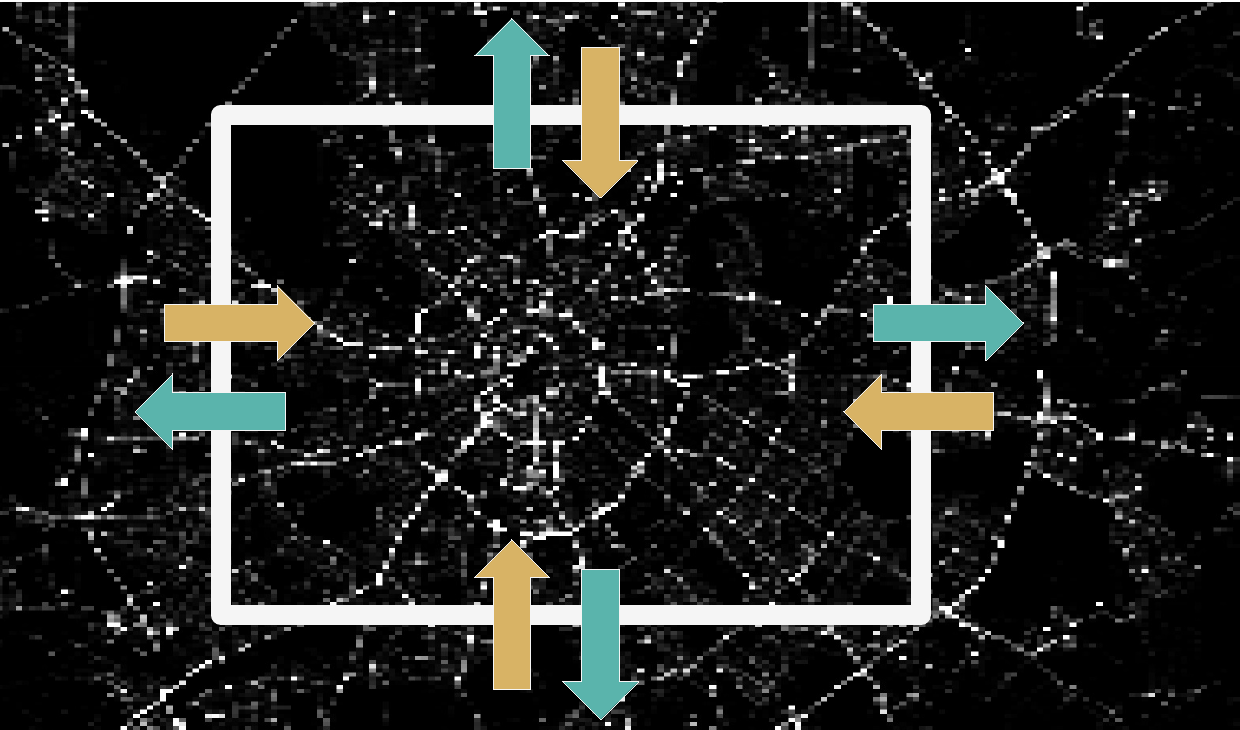}
        \caption{Schematic depiction of inbound-outbound traffic situations that 
        require the conservation-of-vehicles principle. All vehicles on outbound
        roads (yellow arrows) must have entered the city center before (green arrows) 
        or have been present in the first timestep.}
        \label{fig:traffic_inbound_outbound}
    \end{figure} 

\subsection{Inbound-outbound Traffic Forecasting}
We examined the usage of \acp{mclstm} for traffic forecasting in situations in which inbound 
and outbound traffic counts of a city are available (see Fig.~\ref{fig:traffic_inbound_outbound}). 
For this type of data, a \emph{conservation-of-vehicles} principle \citep{nam1996traffic}
must hold, since vehicles can only leave the city if they have entered it before or had been there in the first place. 
Based on data from the traffic4cast 
2020 challenge \citep{kreil2020surprising},
we constructed a 
dataset to model inbound and outbound traffic in 
three different cities: Berlin, Istanbul and Moscow. 
We compared \ac{mclstm} against \ac{lstm}, which is the state-of-the-art method for
several types of traffic forecasting situations \citep{zhao2017lstm,tedjopurnomo2020survey},
and found that \ac{mclstm} significantly outperforms \ac{lstm} in this traffic 
forecasting setting (all $p$-values $\ \leq 0.01$, Wilcoxon test). For details, see Appendix~\ref{app:addons:traffic}.

\subsection{Damped Pendulum}\label{sec:pedulum}
In the area of physics, we examined the usability of \ac{mclstm} for the problem of modeling a swinging damped pendulum. 
Here, the total energy is the conserved property.
During the movement of the pendulum, kinetic energy is converted into potential energy and vice-versa. This conversion between both energies has to be learned by the off-diagonal values of the redistribution matrix. A qualitative analysis of a trained \ac{mclstm} for this problem can be found in Appendix~\ref{app:Qualitative_analysis_pendulum}. 

\begin{figure}
    \centering
    \includegraphics[width=0.9\linewidth]{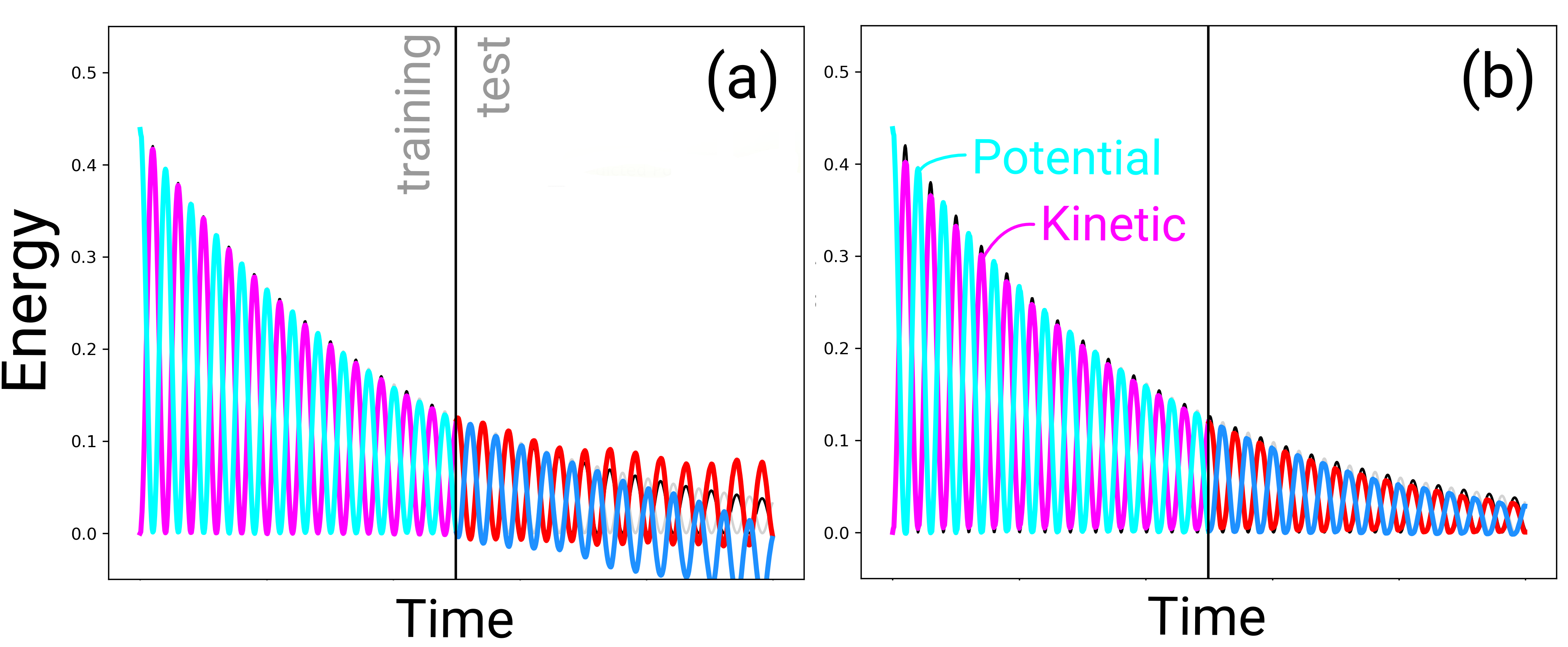}
    \caption{Example for the pendulum-modelling exercise. \textbf{(a)} \ac{lstm} trained for predicting energies of the pendulum with friction in autoregressive fashion, \textbf{(b)} \ac{mclstm} trained in the same setting. Each subplot shows the potential- and kinetic energy and the respective predictions. 
    }
    \label{fig:pendulum}
\end{figure} 

Accounting for friction, energy dissipates and 
the swinging slows over time, toward a fixed point. 
This type of behavior presents a difficulty for machine learning and is impossible for 
methods that assume the pendulum to be a closed system, such as \acsp{hnn} \citep{greydanusDY19} (see Appendix~\ref{app:sec:hnn_comparison}).
We generated $120$ datasets with timeseries of a pendulum, 
where we used multiple different settings for initial angle, length of the pendulum,
and the amount of friction. We then selected \ac{lstm} and \ac{mclstm} models and compared
them with respect to the analytical solution in terms of \ac{mse}. 
For an example, see Fig.~\ref{fig:pendulum}. 
Overall, \ac{mclstm} significantly outperformed \ac{lstm} with 
a mean \ac{mse} of $0.01$ (standard deviation $0.02$) compared to $0.07$ (standard deviation $0.14$; with a $p$-value $4.7\mathrm{e}{-10}$, Wilcoxon test).
In the friction-free case, no significant difference to \acp{hnn} was found 
(see Appendix~\ref{app:sec:hnn_comparison}).


\subsection{Hydrology: Rainfall Runoff Modeling}
\label{sec:hydrology}

\begin{table*}
\caption{Hydrology benchmark results. All values represent the median (25\% and 75\% percentile in sub- and superscript, respectively) over the 447 basins.}
\label{tab:hydrology}
\begin{center}\begin{threeparttable}
\begin{tabular}{lccccc}
\toprule
{} &         MC\textsuperscript{a} &       NSE\textsuperscript{b} &  $\beta$-NSE\textsuperscript{c} &        FLV\textsuperscript{d} &        FHV\textsuperscript{e} \\
\midrule
MC-LSTM Ensemble   &  \ding{51} &  \textit{0.744}$\,_{0.641}^{0.814}$ & -0.020$\,_{-0.066}^{0.013}$ & -24.7$\,_{-94.4}^{31.1}$ & \textbf{-14.7}$\,_{-23.4}^{-7.0}$ \\
LSTM Ensemble      &  \ding{55} &  \textbf{0.763}$\,_{0.676}^{0.835}$ & -0.034$\,_{-0.077}^{-0.002}$ &  36.3$\,_{-0.4}^{59.7}$ & \textit{-15.7}$\,_{-23.8}^{-8.6}$ \\
SAC-SMA        &  \ding{51} &  0.603$\,_{0.512}^{0.682}$ & -0.066$\,_{-0.108}^{-0.026}$ &  37.4$\,_{-31.9}^{68.1}$ & -20.4$\,_{-29.9}^{-12.2}$ \\
VIC (basin)    &  \ding{51} &  0.551$\,_{0.465}^{0.641}$ & \textit{-0.018}$\,_{-0.071}^{0.032}$ & -74.8$\,_{-271.8}^{23.1}$ & -28.1$\,_{-40.1}^{-17.5}$ \\
VIC (regional) &  \ding{51} &  0.307$\,_{0.218}^{0.402}$ & -0.074$\,_{-0.166}^{0.023}$ &  18.9$\,_{-73.1}^{69.6}$ & -56.5$\,_{-64.6}^{-38.3}$ \\
mHM (basin)    &  \ding{51} &  0.666$\,_{0.588}^{0.730}$ & -0.040$\,_{-0.102}^{0.003}$ &  \textit{11.4}$\,_{-64.0}^{65.1}$ & 
-18.6$\,_{-27.7}^{-9.5}$ \\
mHM (regional) &  \ding{51} &  0.527$\,_{0.391}^{0.619}$ & -0.039$\,_{-0.169}^{0.033}$ &  36.8$\,_{-32.6}^{70.9}$ & -40.2$\,_{-51.0}^{-23.8}$ \\
HBV (lower)    &  \ding{51} &  0.417$\,_{0.276}^{0.550}$ & -0.023$\,_{-0.114}^{0.058}$ &  23.9$\,_{-25.9}^{61.0}$ & -41.9$\,_{-55.2}^{-17.3}$ \\
HBV (upper)    &  \ding{51} &  0.676$\,_{0.578}^{0.749}$ & \textbf{-0.012}$\,_{-0.058}^{0.034}$ &  18.3$\,_{-62.9}^{67.5}$ & -18.5$\,_{-27.8}^{-8.5}$ \\
FUSE (900)     &  \ding{51} &  0.639$\,_{0.539}^{0.715}$ & -0.031$\,_{-0.100}^{0.024}$ & \textbf{-10.5}$\,_{-94.8}^{49.2}$ & -18.9$\,_{-27.8}^{-9.9}$ \\
FUSE (902)     &  \ding{51} &  0.650$\,_{0.570}^{0.727}$ & -0.047$\,_{-0.098}^{-0.004}$ & -68.2$\,_{-239.9}^{17.1}$ & -19.4$\,_{-27.9}^{-8.9}$\\
FUSE (904)     &  \ding{51} &  0.622$\,_{0.527}^{0.705}$ & -0.067$\,_{-0.135}^{-0.019}$ & -67.6$\,_{-238.6}^{35.7}$ & -21.4$\,_{-33.0}^{-11.3}$ \\
\bottomrule
\end{tabular}
\begin{tablenotes}[para]
\small{
\textsuperscript{a}: \textit{Mass conservation (MC)}. \\
\textsuperscript{b}: \textit{Nash-Sutcliffe efficiency: $(-\infty, 1]$, values closer to one are desirable.} \\
\textsuperscript{c}: \textit{$\beta$-NSE decomposition: $(-\infty, \infty)$, values closer to zero are desirable.}\\
\textsuperscript{d}: \textit{Bottom 30\% low flow bias: $(-\infty, \infty)$, values closer to zero are desirable.}\\
\textsuperscript{e}: \textit{Top 2\% peak flow bias: $(-\infty, \infty)$, values closer to zero are desirable.}\\
}
\end{tablenotes}
\end{threeparttable}\end{center}
\end{table*}

We tested \ac{mclstm} for large-sample hydrological modeling following \citet{kratzert2019universal}.
An ensemble of 10 \acp{mclstm} was trained on 10 years of data from 447 basins using the publicly-available CAMELS dataset \citep{newman2015development, addor2017camels}.
The mass input is precipitation and auxiliary inputs are: daily min. and max. temperature, solar radiation, and vapor pressure, plus 27 basin characteristics related to geology, vegetation, and climate \citep[described by][]{kratzert2019universal}. 
All models, apart from \ac{mclstm} and \ac{lstm}, were trained 
 by different research groups with experience using each model.
More details are given in Appendix~\ref{app:hyd-training-setup}.


As shown in Tab.~\ref{tab:hydrology}, \ac{mclstm} performed better with respect to the Nash--Sutcliffe Efficiency (NSE; the $R^2$ between simulated and observed runoff) 
than any other mass-conserving hydrology model, although slightly worse than \ac{lstm}. 

NSE is often not the most important metric in hydrology, since water managers are typically concerned 
primarily with extremes (e.g.\ floods). 
\ac{mclstm} performed significantly better ($p = 0.025$, Wilcoxon test) than 
all models, including \ac{lstm}, with respect to high volume flows (FHV), at or above the 
98th percentile flow in each basin.
This makes \ac{mclstm} the current state-of-the-art model for flood prediction. 
\ac{mclstm} also performed significantly better than \ac{lstm} on low volume 
flows (FLV) and overall bias, however there are other hydrology models that are 
better for predicting low flows (which is 
important, e.g.\ for managing droughts).

\paragraph{Model states and environmental processes.}
It is an open challenge to bridge the gap between the fact that \ac{lstm} approaches give generally better 
predictions than other models (especially for flood prediction) and the fact that water managers need 
predictions that help them understand not only how much water will be in a river at a given time, but also how water moves through a basin. 

\begin{figure}[htpb]
\centering
\includegraphics[width=\columnwidth, trim=0 10 0 0, clip]{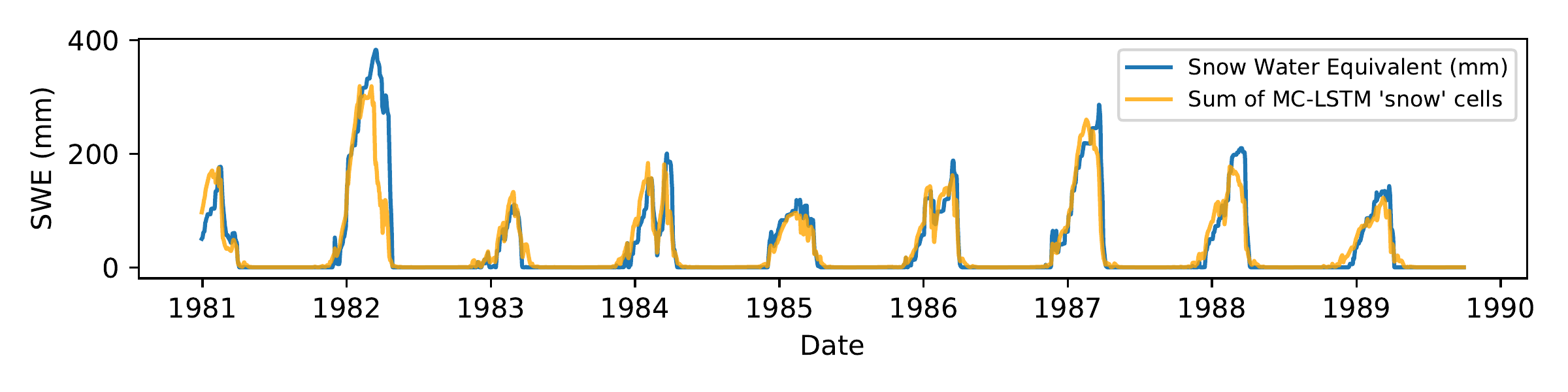}
\caption{Snow-water-equivalent (SWE) from a single basin. The blue line is SWE modeled by \citet{newman2015development}. The orange line is the sum over 4 \ac{mclstm} memory cells (Pearson correlation coefficient $r \geq 0.8$).}

\label{fig:snow-cell}
\end{figure}

Snow processes are difficult to observe and model.
\citet{kratzert2019neuralhydrology} showed that \ac{lstm} learns to track snow in memory cells without 
requiring snow data for training. 
We found similar behavior in \acp{mclstm}, which has the advantage of doing this with memory cells that are 
\emph{true} mass storages. 
Figure \ref{fig:snow-cell} shows the snow as the sum over a subset of \ac{mclstm} memory states and snow water 
equivalent (SWE) modeled by the well-established Snow-17 snow model \citep{anderson1973national} (Pearson correlation coefficient $r \geq 0.91$). 
It is important to note that \acp{mclstm} did not have access to 
any snow data during training.  
In the best case, it is possible to take advantage 
of the inductive bias to predict how much water will be stored 
as snow under different conditions by using simple combinations or 
mixtures of the internal states. 
Future work will determine whether this is possible with other difficult-to-observe states and fluxes.

\subsection{Ablation Study} 
In order to demonstrate that the design choices of \ac{mclstm} 
are necessary together to enable accurate predictive models, 
we performed an ablation study. In this study, we made changes that 
disrupt the mass conservation property a) of the input gate, 
b) the redistribution operation, and c) the output gate. 
We tested these three variants on data from the hydrology experiments. 
We chose 5 random basins to limit computational expenses and 
trained nine repetitions for 
each configuration and basin. 
The
strongest decrease in performance is observed if the
redistribution matrix
does not conserve mass, and smaller decreases 
if input or output gate do not conserve mass. 
The results of the ablation study indicate that the design of the input gate, 
redistribution matrix, and output gate, are necessary together 
to obtain accurate and mass-conserving models (see Appendix Tab.~\ref{tab:ablation}). 

\section{Conclusion}
\label{sec:outro}
We have demonstrated how to design an \ac{rnn} that has the property to conserve mass of particular inputs. 
This architecture is proficient as neural arithmetic unit and is well-suited for predicting physical systems like hydrological processes, in which water mass has to be conserved. 
We envision that \ac{mclstm} can become a powerful tool in modeling environmental, sustainability, and biogeochemical cycles.

\section*{Acknowledgments}
The ELLIS Unit Linz, the LIT AI Lab, the Institute for Machine Learning, are supported by the Federal State Upper Austria. 
IARAI is supported by Here Technologies.
We thank the projects AI-MOTION (LIT-2018-6-YOU-212), DeepToxGen (LIT-2017-3-YOU-003), AI-SNN (LIT-2018-6-YOU-214), DeepFlood (LIT-2019-8-YOU-213), Medical Cognitive Computing Center (MC3), PRIMAL (FFG-873979), S3AI (FFG-872172), DL for granular flow (FFG-871302), ELISE (H2020-ICT-2019-3 ID: 951847), AIDD (MSCA-ITN-2020 ID: 956832).
We thank Janssen Pharmaceutica, UCB Biopharma SRL, Merck Healthcare KGaA, Audi.JKU Deep Learning Center, TGW LOGISTICS GROUP GMBH, Silicon Austria Labs (SAL), FILL Gesellschaft mbH, Anyline GmbH, Google, ZF Friedrichshafen AG, Robert Bosch GmbH, Software Competence Center Hagenberg GmbH, T\"{U}V Austria, and the NVIDIA Corporation.

\bibliography{references}
\bibliographystyle{icml2021}

\clearpage


\renewcommand\thefigure{\thesection.\arabic{figure}}    
\setcounter{figure}{0}    

\renewcommand\thetable{\thesection.\arabic{table}}    
\setcounter{table}{0}

\icmltitlerunning{\acs{mclstm}: Appendix}

\appendix

\section{Notation Overview}
\label{app:notation}

    Most of the notation used throughout the paper, is summarized in Tab.~\ref{tab:notation}.

    \begin{table*}
        \caption{Symbols and notations used in this paper.}
        \label{tab:notation}
        \begin{center}\begin{tabular}{lcl} 
            \toprule
            Definition &  Symbol/Notation & Dimension \\ 
            \midrule
             mass input at timestep $t$   & $\vec{x}\timestep{t}$ or $\mass{x}\timestep{t}$ & $M$ or $1$ \\
             auxiliary input at timestep $t$   & $\aux{\vec{a}}\timestep{t}$&  $L$ \\
             cell state at timestep $t$   & $\vec{c}\timestep{t}$& $K$ \\
            limit of sequence of cell states & $\vec{c}^\infty$ &  \\
             hidden state at timestep $t$   & $\vec{h}\timestep{t}$& $K$ \\
             redistribution matrix & $\mat{R}$ & $K \times K$ \\
             input gate & $\vec{i}$ & $K$ \\
             output gate & $\vec{o}$ & $K$ \\
             mass & $\vec{m}$ & $K$ \\
             input gate weight matrix & $\mat{W}_\mathrm{i}$ & $K \times L$ \\
             input gate weight matrix & $\mat{W}_\mathrm{o}$ & $K \times L$ \\
             output gate weight matrix& $\mat{U}_\mathrm{i}$ & $K \times K$ \\
             output gate weight matrix & $\mat{U}_\mathrm{o}$ & $K \times K$ \\
             identity matrix & $\mat{K}$ & $K \times K$ \\
            
             input gate bias & $\vec{b}_\mathrm{i}$ & $K$ \\
             output gate bias & $\vec{b}_\mathrm{o}$ & $K$ \\
             arbitrary differentiable function & $f$ & \\
             hypernetwork function (conditioner)  & $g$ & \\
             redistribution gate bias & $\mat{B}_\mathrm{R}$ & $K \times K$ \\
            
             stored mass  & $m_c$ &  \\
             mass efflux & $m_h$ &  \\
             limit of series of mass inputs & $m_x^\infty$ &  \\
            
             timestep index & $t$  &  \\
             an arbitrary timestep & $\tau$  &  \\
             last timestep of a sequence & $T$  &  \\
               
             redistribution gate weight tensor & $\ten{W}_\mathrm{r}$ & $K \times K \times L$  \\
             redistribution gate weight tensor & $\ten{U}_\mathrm{r}$ & $K \times K \times K$  \\
              
             arbitrary feature index  & $a$  &  \\
             arbitrary feature index  & $b$  &  \\
             arbitrary feature index & $c$  &  \\
             
             \bottomrule
        \end{tabular}\end{center}
    \end{table*}

\section{Experimental Details}
\label{app:details}

    In the following, we provide further details on the experimental setups.

\subsection{Neural Arithmetic}
\label{app:details:arithmetic}

Neural networks that learn arithmetic operations have recently 
come into focus \citep{traskHRRD18,madsenJ20}. 
Specialized neural modules for arithmetic operations could play a role
for complex AI systems since cognitive studies indicate that 
there is a part of the brain that enables animals and humans 
to perform basic arithmetic operations \citep{nieder16, gallistel18}. 
Although this primitive number processor can only perform approximate 
arithmetic, it is a fundamental part of our ability to understand and 
interpret numbers \citep{dehaene11}.

\subsubsection{Details on Datasets}
\label{app:details:arithmetic:data}
    
    We consider the \emph{addition problem} that was proposed in the original \ac{lstm} paper \citep{hochreiterS97}. We chose input values in the range $[0, 0.5]$ in order to be able to use the fast standard implementations of \ac{lstm}. For this task, 20\,000 samples were generated using a fixed random seed to create a dataset, which was split in 50\% training and 50\% validation samples. For the test data, 1\,000 samples were generated with a different random seed.
    
    A definition of the \emph{static arithmetic} task is provided by \citep{madsenJ20}. The following presents this definition and its extension to the \emph{recurrent arithmetic} 
    task \citep[c.f.][]{traskHRRD18}. 
    
    The input for the static version is a vector, $\vec{x} \in \mathcal{U}(1, 2)^{100}$, consisting of numbers that are drawn randomly from a uniform distribution. The target, $y$, is computed as
    \begin{equation*}
        y = \left(\sum_{k=a}^{a + c} x_k\right) \mathbin{\square} \left(\sum_{k=b}^{b + c} x_k\right),
    \end{equation*}
    where $c \in \nats$, $a \leq b \leq a + c \in \nats$ and $\square \in \{ +, -, \cdot \}$. For the recurrent variant , the input consists of a sequence of $T$ vectors, denoted by $\vec{x}\timestep{t} \in \mathcal{U}(1, 2)^{10}, t \in \{1, \ldots, T\}$, and the labels are computed as
    \begin{equation*}
        y = \left(\sum_{t=1}^T \sum_{k=a}^{a + c} x_k\timestep{t}\right) \mathbin{\square} \left(\sum_{t=1}^T \sum_{k=b}^{b + c} x_k\timestep{t}\right).
    \end{equation*}
    For these experiments, no fixed datasets were used. Instead, samples were generated on the fly. For the recurrent tasks, 2\,000\,000 batches of 128 problems were created and for the static tasks 500\,000 batches of 128 samples were used in the addition and subtraction tasks and 3\,000\,000 batches for multiplication. Note that since the subsets overlap, i.e., inputs are re-used, this data does not have mass conservation properties. 
    
    For a more detailed description of the \emph{MNIST addition} data, we refer to \citep{traskHRRD18} and the appendix of \citep{madsenJ20}.
    
\subsubsection{Details on Hyperparameters.}
\label{app:details:arithmetic:hyper}

    For the \emph{addition problem}, every network had a single hidden layer with 10 units. The output layer was a linear, fully connected layer for all \ac{mclstm} and \ac{lstm} variants. The \ac{nau} \citep{madsenJ20} and \ac{nalu}/\ac{nac} \citep{traskHRRD18} networks used their corresponding output layer. Also, we used a more common $L_2$ regularization scheme with low regularization constant ($10^{-4}$) to keep the weights ternary for the \ac{nau}, rather than the strategy used in the reference implementation from \citet{madsenJ20}. Optimization was done using Adam \citep{kingmaB15} for all models. The initial learning rate was selected from $\{0.1, 0.05, 0.01, 0.005, 0.001\}$ on the validation data for each method individually. All methods were trained for 100 epochs.

    The weight matrices of \ac{lstm} were initialized in a standard way, using orthogonal and identity matrices for the forward and recurrent weights, respectively. Biases were initialized to be zero, except for the bias in the forget gate, which was initialized to 3. This should benefit the gradient flow for the first updates. Similarly, \ac{mclstm} is initialized so that the redistribution matrix (cf. Eq.~\ref{eq:mclstm:redistribution}) is (close to) the identity matrix. Otherwise we used orthogonal initialization \citep{saxe2013exact}. The bias for the output gate was initialized to -3. This stimulates the output gates to stay closed (keep mass in the system), which has a similar effect as setting the forget gate bias in \ac{lstm}. This practically holds for all subsequently described experiments.
    
    For the \emph{recurrent arithmetic tasks}, we tried to stay as close as possible to the setup that was used by \citet{madsenJ20}. This means that all networks had again a single hidden layer. The \ac{nau}, \ac{nmu} and \ac{nalu} networks all had two hidden units and, respectively, \ac{nau}, \ac{nmu} and \ac{nalu} output layers. The first, recurrent layer for the first two networks was a \ac{nau} and the \ac{nalu} network used a recurrent \ac{nalu} layer.
    For the exact initialization of \ac{nau} and \ac{nalu}, we refer to \citep{madsenJ20}.
    
    The \ac{mclstm} models used a fully connected linear layer with $L_2$-regularization for projecting the hidden state to the output prediction for the addition and subtraction tasks. A free linear layer was used to compensate for the fact that the data does not have mass-conserving properties. However, it is important to note that the mass conservation in \ac{mclstm} is still necessary to solve this task. For the multiplication problem, we used a multiplicative, non-recurrent variant of \ac{mclstm} with an extra scalar parameter to allow the conserved mass to be re-scaled if necessary. This multiplicative layer is described in more detail in Appendix~\ref{app:details:arithmetic:static}. 
    
    Whereas the addition could be solved with two hidden units, \ac{mclstm} needed three hidden units to solve both subtraction and multiplication. This extra unit, which we refer to as the \emph{trash cell}, allows \acp{mclstm} to get rid of excessive mass that should not influence the prediction. Note that, since the mass inputs are vectors, the input gate has to be computed in a similar fashion as the redistribution matrix. Adam was again used for the optimization. We used the same learning rate ($0.001$) as \citet{madsenJ20} to train the \ac{nau}, \ac{nmu} and \ac{nalu} networks. For \ac{mclstm} the learning rate was increased to 0.01 for addition and subtraction and 0.05 for multiplication after a manual search on the validation set. All models were trained for two million update steps.
    
    In a similar fashion, we used the same models from \citet{madsenJ20} for the \emph{MNIST addition} task. For \ac{mclstm}, we replaced the recurrent \ac{nau} layer with a \ac{mclstm} layer and the output layer was replaced with a fully connected linear layer. In this scenario, increasing the learning rate was not necessary. This can probably be explained by the fact that training \ac{cnn} to regress the MNIST images is the main challenge during learning. We also used a standard $L_2$-regularization on the outputs of \ac{cnn} instead of the implementation proposed in \citep{madsenJ20} for this task.

\subsubsection{Static Arithmetic}
\label{app:details:arithmetic:static}
    This experiment should enable a more direct comparison to the results from \citet{madsenJ20} than the recurrent variant. The data for the static task is equivalent to that of the recurrent task with sequence length one. For more details on the data, we refer to Appendix~\ref{app:details:arithmetic:data} or \citep{madsenJ20}.
    
    Since the static task does not require a recurrent model, we discarded the redistribution matrix in \ac{mclstm}. The result is a layer with only input and output gates, which we refer to as a \ac{mcfc} layer. We compared this model to the results reported in \citep{madsenJ20}, using the code base that accompanied the paper. All \ac{nalu} and \ac{nau} networks had a single hidden layer. Similar to the recurrent task, \ac{mclstm} required two hidden units for addition and three for subtraction.
    Mathematically, an \ac{mcfc} with $K$ hidden neurons and $M$ inputs can be defined as $\mathrm{\ac{mcfc}} : \reals^M \to \reals^K : \vec{x} \mapsto \vec{y}$, where
    \begin{align*}
        \vec{y} & = \diag(\vec{o}) \cdot \mat{I} \cdot \vec{x} &
        \mat{I} & = \softmax(\mat{B}_I) &
        \vec{o} & = \sigmoid(\vec{b}_o),
    \end{align*}
    where the softmax operates on the row dimension to get a column-normalized matrix, $\mat{I}$, for the input gate. 
    
    Using the $\log$-$\exp$ transform \citep[c.f.][]{traskHRRD18}, a multiplicative \ac{mcfc} with scaling parameter, $\vec{\alpha}$, can be constructed as follows:
    $\exp(\mathrm{\ac{mcfc}}(\log(\vec{x})) + \vec{\alpha})$.
    The scaling parameter is necessary to break the mass conservation when it is not needed. By replacing the output layer with this multiplicative \ac{mcfc}, it can also be used to solve the multiplication problem. This network also required three hidden neurons. This model was compared to a \ac{nmu} network with two hidden neurons and \ac{nalu} network. 
    
    All models were trained for two million updates with the Adam optimizer \citep{kingmaB15}. The learning rate was set to 0.001 for all networks, except for the \ac{mcfc} network, which needed a lower learning rate of 0.0001, and the multiplicative \ac{mcfc} variant, which was trained with learning rate 0.01. These hyperparameters were found using a manual search.
    
    \begin{table*}
        \caption{Results for the static arithmetic task. \ac{mcfc} is a mass-conserving variant of \ac{mclstm} based on fully-connected layers for non-recurrent tasks. \acp{mcfc} for addition and subtraction/multiplication have two and three neurons, respectively. 
        Error bars represent 95\% confidence intervals.}
        \label{tab:arithmeticS}
        \begin{center}\begin{threeparttable}
            \begin{tabular}{rcccccc}
                \toprule
                & \multicolumn{2}{c}{addition} & \multicolumn{2}{c}{subtraction} & \multicolumn{2}{c}{multiplication} \\
                \cmidrule{2-3} \cmidrule{4-5} \cmidrule{6-7}
                 & success rate\tnote{a} & updates\tnote{b} & success rate\tnote{a} & updates\tnote{b} & success rate\tnote{a} & updates\tnote{b} \\
                \midrule
                \acs{mcfc} & $\mathbf{100\%}~^{+0\%}_{-4\%}$ & $2.1 \cdot 10^5$ & 
                $\mathbf{100\%}~^{+0\%}_{-4\%}$ & $1.6 \cdot 10^5$ &
                $\mathbf{100\%}~^{+0\%}_{-4\%}$ & $1.4 \cdot 10^6$ \\
                \acs{nau} / \acs{nmu} & $\mathbf{100\%}~^{+0\%}_{-4\%}$ & $1.8 \cdot 10^4$ &
                $\mathbf{100\%}~^{+0\%}_{-4\%}$ & $ 5.0 \cdot 10^3$ &
                $98\%~^{+1\%}_{-5\%}$ & $1.4 \cdot 10^6$ \\
                \acs{nac} & $\mathbf{100\%}~^{+0\%}_{-4\%}$ & $2.5 \cdot 10^5$ &
                $\mathbf{100\%}~^{+0\%}_{-4\%}$ & $9.0 \cdot 10^3$ &
                $31\%~^{+10\%}_{-8\%}$ & $2.8 \cdot 10^6$ \\
                \acs{nalu} & $14\%~^{+8\%}_{-5\%}$ & $1.5 \cdot 10^6$ &
                $14\%~^{+8\%}_{-5\%}$ & $1.9 \cdot 10^6$ &
                $0\%~^{+4\%}_{-0\%}$ & -- \\
                \bottomrule
            \end{tabular}
            \begin{tablenotes}
                \item [a] Percentage of runs that generalized to a different input range.
                \item [b] Median number of updates necessary to solve the task.
            \end{tablenotes}
        \end{threeparttable}\end{center}
    \end{table*}
    
    Since the input consists of a vector, the input gate predicts a left-stochastic matrix, similar to the redistribution matrix. This allows us to verify generalization abilities of the inductive bias in \acp{mclstm}. The performance was measured in a similar way as for the recurrent task, except that generalization was tested over the range of the input values \citep{madsenJ20}. Concretely, the models were trained on input values in 
    $[1, 2]$ and tested on input values in the range $[2, 6]$. Table~\ref{tab:arithmeticS} shows that \ac{mcfc} is able to match or outperform both \ac{nalu} and \ac{nau} on this task.
    

    
\subsubsection{Comparison with Time-dependent \ac{mclstm}}

    We used \ac{mclstm} with a time-independent redistribution matrix, as in Eq.~\eqref{eq:mclstm:redistribution}, to solve the addition problem. This resembles another form of inductive bias, since we know that no redistribution across cells is necessary to solve this problem and it results also in a more efficient model, because less parameters have to be learned. However, for the sake of flexibility, we also verified that it is possible to use the more general time-dependent redistribution matrix (cf. Eq.~\ref{eq:R}). The results of this experiment can be found in Table~\ref{tab:lstm_addition_full}.
    
    Although the performance of \ac{mclstm} with time-dependent redistribution matrix is slightly worse than that of the more efficient \ac{mclstm} variant, it still outperforms all other models on the generalisation tasks. This can partly be explained by the fact that is harder to train a time-dependent redistribution matrix, while the training budget is limited to 100 epochs.
    
    \begin{table*}
        \caption{Performance of different models on the \ac{lstm} addition task in terms of the \ac{mse}. \ac{mclstm} significantly (all $p$-values below $.05$) outperforms its competitors, \ac{lstm} (with high initial forget gate bias), \ac{nalu}, \ac{nau} a layer-normalized \ac{lstm} (LN-\acs{lstm}) and unitary \acp{rnn} (U\acs{rnn}). Error bars represent 95\%-confidence intervals across 100 runs.}
        \label{tab:lstm_addition_full}
        
        \begin{center}\begin{threeparttable}
            \begin{tabular}{lr@{$\ \pm\ $}lr@{$\ \pm\ $}lr@{$\ \pm\ $}lr@{$\ \pm\ $}lr@{$\ \pm\ $}lr}
                \toprule
                & \multicolumn{2}{c}{reference\tnote{a}} & \multicolumn{2}{c}{seq length\tnote{b}} & \multicolumn{2}{c}{input range\tnote{c}} & \multicolumn{2}{c}{count\tnote{d}} & \multicolumn{2}{c}{combo\tnote{e}} & \texttt{NaN}\tnote{f} \\
                \midrule
                \acs{mclstm}\tnote{\textdagger} & 0.013 & 0.004 & 0.022 & 0.010 & 2.6 & 0.8 & 2.2 & 0.7 & 13.6 & 4.0 & 0 \\
                \acs{mclstm} & \textbf{0.004} & 0.003 & \textbf{0.009} & 0.004 & \textbf{0.8} & 0.5 & \textbf{0.6} & 0.4 & \textbf{4.0} & 2.5 & 0 \\
                \acs{lstm} & 0.008 & 0.003 & 0.727 & 0.169 & 21.4 & 0.6 & 9.5 & 0.6 & 54.6 & 1.0 & 0 \\
                LN-\acs{lstm} & 0.026 & 0.003 & 0.055 & 0.010 & 24.5 & 0.3 & 7.5 & 0.2 & 62.0 & 0.5 & 0 \\
                U\acs{rnn} & 0.043 & 0.001 & 0.139 & 0.133 & 99.9 & 63.8 & 7.0 & 0.1 & 88.1 & 3.4 & 0 \\
                \acs{nalu} & 0.060 & 0.008 & 0.059 & 0.009 & 25.3 & 0.2 & 7.4 & 0.1 & 63.7 & 0.6 & 93 \\
                \acs{nau} & 0.248 & 0.019 & 0.252 & 0.020 & 28.3 & 0.5 & 9.1 & 0.2 & 68.5 & 0.8 & 24 \\
                \bottomrule
            \end{tabular}
            \begin{tablenotes}
                \item [a] training regime: \hfill summing 2 out of 100 numbers between 0 and 0.5.
                \item [b] longer sequence lengths: \hfill summing 2 out of 1\,000 numbers between 0 and 0.5.
                \item [c] more \emph{mass} in the input: \hfill summing 2 out of 100 numbers between 0 and 5.0.
                \item [d] higher number of summands: \hfill summing 20 out of 100 numbers between 0 and 0.5.
                \item [e] combination of previous scenarios: \hfill summing 10 out of 500 numbers between 0 and 2.5.
                \item [f] Number of runs that did not converge.
                \item[\textdagger] \acs{mclstm} with time-dependent redistribution matrix.
            \end{tablenotes}
        \end{threeparttable}\end{center}
    \end{table*}
    
\subsubsection{Comparison with Normalized and Unitary Networks}

    In order to account for the limited range of \acp{lstm}, normalization techniques can be used to keep the data within a manageable range. 
    Therefore, we also compared \ac{mclstm} to an \ac{lstm} with layer normalization \citep{ba16layer}. 
    Although the layer normalization improves the generalization performance, it does not match the performance of \ac{mclstm} (see Table~\ref{tab:lstm_addition_full}).
    
    Whereas \ac{mclstm} preserves the $L_1$ norm, unitary \acp{rnn} \citep{arjovsky16unitary} preserve the $L_2$ norm.
    To make sure that our inductive bias on the $L_1$ norm is justified, we directly compared \ac{mclstm} to a unitary \ac{rnn}.
    We adopted the hyperparameters from \citet{arjovsky16unitary} and tried fine-tuning them, but were unable to reproduce the results on the addition problem.
    Nevertheless, we include the generalization performance of the best performing unitary \ac{rnn} in table~\ref{tab:lstm_addition_full}.
    
\subsubsection{Qualitative Analysis of the \ac{mclstm} Models Trained on Arithmetic Tasks}
\label{app:details:arithmetic:analysis}

    \paragraph{Addition Problem.}
    To reiterate, we used \ac{mclstm} with 10 hidden units and replaced the linear output layer by a simple summation.
    The model has to learn to sum all mass inputs of the timesteps, where the auxiliary input (the \emph{marker}) equals $\aux{a}\timestep{t} = 1$, and ignore all other values. 
    At the final timestep --- where the auxiliary input equals $\aux{a}\timestep{t} = -1$ --- the network should output the sum of all previously marked mass inputs.
    
    In our experiment, the model has learned to store the marked input values in a single cell, while all other mass inputs mainly end up in a single, different cell. That is, a single cell learns to accumulate the inputs to compute the solution and the other cells are used as \textit{trash cells}.
    In Fig.~\ref{app:fig:arithmetic-analysis}, we visualize the cell states for a single input sample over time, where the orange and the blue line denote the mass accumulator and the main trash cell, respectively.

    \begin{figure}[htp]
        \centering
        \includegraphics[width=.9\linewidth]{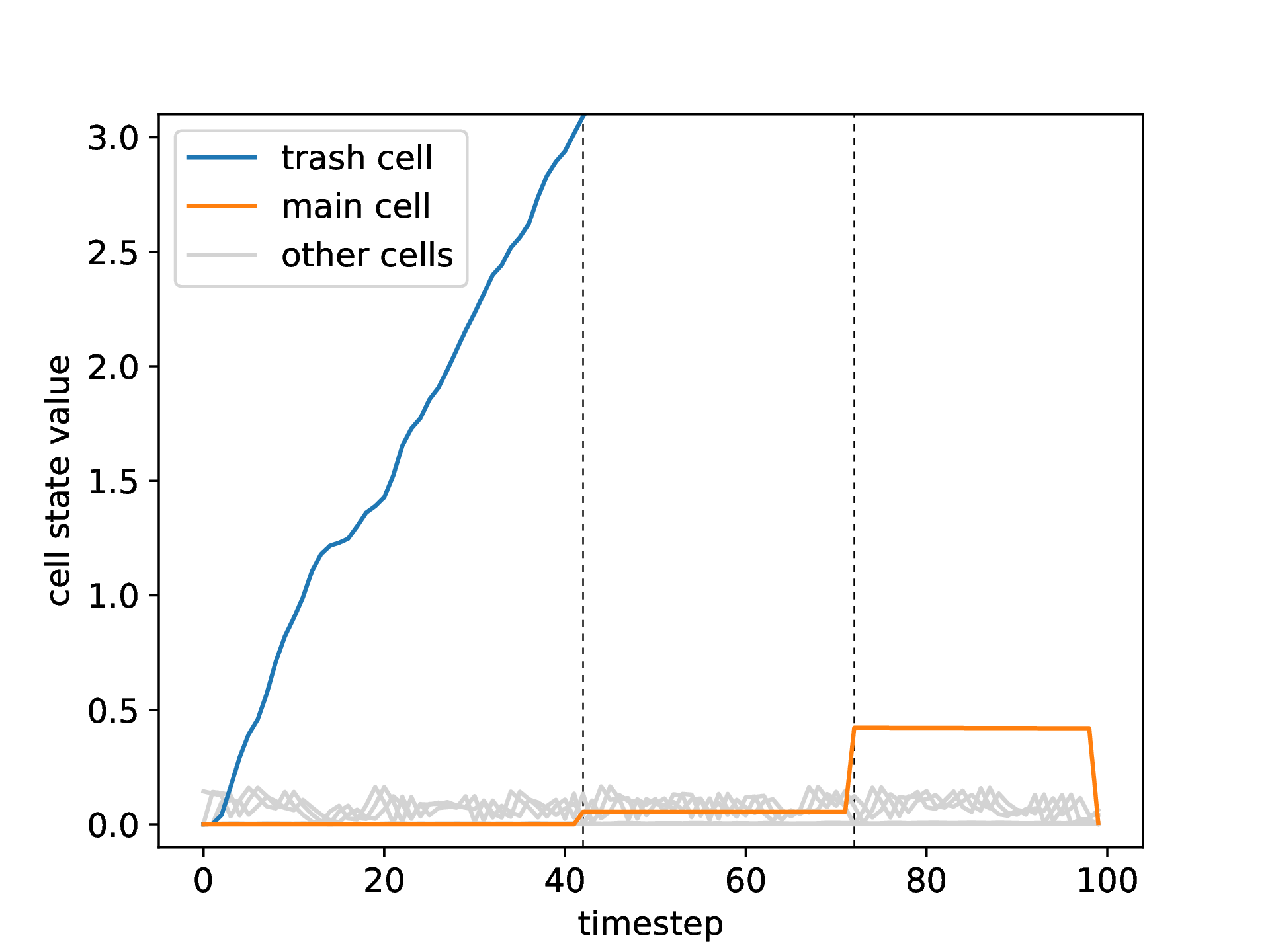}
        \caption{\ac{mclstm} cell states over time for model trained to solve the \textit{addition problem} (see Appendix \ref{app:details:arithmetic:data}). Each line denotes the value of one particular cell over time, while the two vertical grey indicator lines denote the timesteps, where the auxiliary input was 1 (i.e., which numbers in the sequence have to be added).
        \label{app:fig:arithmetic-analysis}}
    \end{figure} 
    
    We can see that at the last time step --- where the network is queried to return the accumulated sum --- the value of this mass accumulator drops to zero, i.e., the output gate is completely open. Note that this would not be the case for a model with a fully connected layer. After all, the fully connected layer can arbitrarily scale the output of the \ac{mclstm} layer, which allows the output gate to open only partially. Apart from this distinction in the last timestep, the cell states for both models behave the same way.
    For all other cells (grey lines), the output gate at the last time step is zero. 
    This illustrates nicely how the model output is only determined by the value of the single cell that acted as accumulator of the marked values (orange line).
    
    Also note the accumulating behaviour of the main trash cell (blue line).
    This can become a problem for very long sequences, because the logistic sigmoid in the output gate can never be perfectly 1 or 0.
    This means that if the value in the trash cell grows too large, it might effectively leak into the output.
    However, this undesired behaviour could be countered by, e.g., $L_2$ regularisation on the cell states, or in case of continuous prediction, adding an extra output as outlet for unnecessary mass (see Sec.~\ref{app:hyd-training-setup}).
    This should push the network to dump the trash cell to the output at timesteps where the output is not used.
    
    \paragraph{Recurrent Arithmetic.}
    In the following we take a closer look at the solution that is learned with \ac{mclstm}. Concretely, we look at the weights of a \ac{mclstm} model that successfully solves the following recurrent arithmetic  task:
    \begin{equation*}
        y = \sum_{t=1}^T (\textcolor{red}{x_6\timestep{t}} + \textcolor{cyan}{x_7\timestep{t}}) \mathbin{\square} \sum_{t=1}^T (\textcolor{cyan}{x_7\timestep{t}} + \textcolor{brown}{x_8\timestep{t}}),
    \end{equation*}
    where $\square \in \{-, +\}$, given a sequence of input vectors $\vec{x}^t \in \reals^{10}$ (the only purpose of the colors is to provide an aid to readers). We highlight the following observations: 
    
    \begin{enumerate}
    \item For the addition task (i.e., $\square \equiv +$), \ac{mclstm} has two units (see Appendix~\ref{app:details:arithmetic:hyper} for details on the experiments). \citet{traskHRRD18, madsenJ20} fixed the number of hidden units to two with the idea that each unit can learn one term of the addition operation ($\square$). However, if we take a look at the input gate of our model, we find that the first cell is used to accumulate $(x_1\timestep{t} + \ldots + x_5\timestep{t} + 0.5 \textcolor{red}{x_6\timestep{t}} + 0.5\textcolor{brown}{x_8\timestep{t}} + x_9\timestep{t} + x_{10}\timestep{t})$ and the second cell collects $(0.5 \textcolor{red}{x_6\timestep{t}} + \textcolor{cyan}{x_7\timestep{t}} + 0.5 \textcolor{brown}{x_8\timestep{t}})$. Since the learned redistribution matrix is the identity matrix, these accumulators operate individually. 
    
    This means that, instead of computing the individual terms, \ac{mclstm} directly computes the solution, scaled by a factor \nicefrac{1}{2} in its second cell. The first cell accumulates the rest of the mass, which it does not need for the prediction. In other words, it operates as some sort of \emph{trash cell}. Note that due to the mass-conservation property, it would be impossible to compute each side of the operation individually. After all, $\textcolor{cyan}{x_7\timestep{t}}$ appears on both sides of the central operation ($\square$), and therefore the data is not mass conserving.
    
   The output gate is always open for the \emph{trash cell} and closed for the other cell, indicating that redundant mass is discarded through the output of the \ac{mclstm} in every timestep and the scaled solution is properly accumulated. However, in the final timestep --- when the prediction is to be made, the output gate for the trash cell is closed and opened for the other cell. That is, the accumulated solution is passed to the final linear layer, which scales the output of \ac{mclstm} by a factor of two to get the correct solution.
    
    \item For the subtraction task (i.e., $\square \equiv -$), a similar behavior can be observed. In this case, the final model requires three units to properly generalize. The first two cells accumulate $\textcolor{red}{x_6\timestep{t}}$ and $\textcolor{brown}{x_8\timestep{t}}$, respectively. The last cell operates as \emph{trash cell} and collects $(x_1\timestep{t} + \ldots + x_5\timestep{t} + \textcolor{cyan}{x_7\timestep{t}} + x_9\timestep{t} + x_{10}\timestep{t})$. The redistribution matrix is the identity matrix for the first two cells. For the \emph{trash cell}, equal parts (0.4938) are redistributed to the two other cells. The output gate operates in a similar fashion as for addition. Finally, the linear layer computes the difference between the first two cells with weights 1, -1 and the \emph{trash cell} is ignored with weight 0.
    
    Although \ac{mclstm} with two units was not able to generalize well enough for the \citet{madsenJ20} benchmarks, it did turn out to be able to provide a reasonable solution (albeit with numerical flaws). With two cells, the network learned to store $(0.5 x_1\timestep{t} + \ldots + 0.5 x_5\timestep{t} + \textcolor{red}{x_6\timestep{t}} + 0.5 \textcolor{cyan}{x_7\timestep{t}}+ 0.5 x_9\timestep{t} + 0.5 x_{10}\timestep{t})$ in one cell, and $(0.5 x_1\timestep{t} + \ldots + 0.5 x_5\timestep{t} + 0.5 \textcolor{cyan}{x_7\timestep{t}} + \textcolor{brown}{x_8\timestep{t}} + 0.5 x_9\timestep{t} + 0.5 x_{10}\timestep{t})$ in the other cell. With a similar linear layer as for the three-unit variant, this solution should also compute a correct solution for the subtraction task.
\end{enumerate}

\subsection{Inbound-outbound Traffic Forecast}
\label{app:addons:traffic}

    Traffic forecasting considers a large number of different 
    settings and tasks \citep{tedjopurnomo2020survey}.
    For example whether the physical network topology of streets can be exploited 
    by using graph neural networks combined with \acp{lstm} \citep{cui2019traffic}. 
    Within traffic forecasting mass conservation translates to a \emph{conservation-of-vehicles} principle. 
    Generally, models that adhere to this principle 
    are desired \citep{vanajakshi2004loop,zhao2017lstm} since they
    could be useful for long-term forecasts.
    Many recent benchmarking datasets for traffic forecasts are usually uni-directional and are measured at few streets. Thus conservation laws cannot be directly applied  
    \citep{tedjopurnomo2020survey}. 

    We demonstrate how \ac{mclstm} can be used in traffic forecasting
    settings. A typical setting for vehicle conservation is 
    when traffic counts for inbound and outbound roads of a city are available. 
    In this case, all vehicles that come from an inbound road must either be within
    a city or leave the city on an outbound road. 
    The setting is similar to passenger flows in inbound and 
    outbound metro \citep{liu2019deeppf}, where \acp{lstm} have also prevailed.
    We were  able to extract such data from a recent dataset 
    based on GPS-locations \citep{kreil2020surprising} of vehicles at 
    a fine geographic grid around cities, which represents good approximation of
    a vehicle conserving scenario.
    
    
\paragraph{An approximately mass-conserving traffic dataset}

    Based on the data for the traffic4cast 2020 challenge \citep{kreil2020surprising}, we constructed a dataset to model inbound and outbound traffic of three different cities: Berlin, Istanbul and Moscow. The original data consists of 181 sequences of multi-channel images encoding traffic volume and speed for every five minutes in four (binned) directions. Every sequence corresponds to a single day in the first half of the year. In order to get the traffic flow from the multi-channel images at every timestep, we defined a frame around the city and collected the traffic-volume data for every pixel on the border of this frame. This is illustrated in Fig.~\ref{fig:traffic_inbound_outbound}. For simplicity, we ignored the fact that a single-pixel frame might have issues with fast-moving vehicles.
    
    By taking into account the direction of the vehicles, the inbound and outbound traffic can be combined for every pixel on the border of our frame. To get a more tractable dataset, we additionally combined the pixels of the four edges of the frame to end up with eight values: four values for the incoming traffic, i.e: one for each border of the frame, and four values for the outgoing traffic. The inbound traffic would be the \emph{mass} input for \ac{mclstm} and the target outputs are the outbound traffic along the different borders. The auxiliary input is the current daytime, encoded as a value between zero and one.
    
    To model the sparsity that is often available in other traffic counting problems, we chose three time-slots (6 am, 12 pm and 6 pm) for which we use fifteen minutes of the actual measurements --- i.e., three timesteps. This could for example simulate the deployment of mobile traffic counting stations. The other inputs are imputed by the average inbound traffic over the training data, which consists of 181 days. Outputs are only available when the actual measurements are used. This gives a total of 9 timesteps per day on which the loss can be computed. For training, this dataset is randomly split in 85\% training and 15\% validation samples.
    
    During inference, all 288 timesteps of the inbound and outbound measurements are used to find out which model learned the traffic dynamics from the sparse training data best. For this purpose, we used the 18 sequences of validation data from the original dataset as test set, which are distributed across the second half of the year. In order to enable a fair comparison between \ac{lstm} and \ac{mclstm}, the data for \ac{lstm} was normalized to zero mean and unit variance for training and inference (using statistics from the training data). \ac{mclstm} does not need this pre-processing step and is fed the raw data.
    
 \paragraph{Model and Hyperparameters}
    For the traffic prediction, we used \ac{lstm} followed by a fully connected layer as baseline \citep[c.f.][]{zhao2017lstm, liu2019deeppf}. For \ac{mclstm}, we chose to enforce end-to-end mass conservation by using a \ac{mcfc} output layer, which is described in detail in Appendix~\ref{app:details:arithmetic:static}. For the initialization of the models, we refer to the details of the arithmetic experiments in Appendix~\ref{app:details:arithmetic}.
    
    For each model and for each city, the best hyperparameters were found by performing a grid search on the validation data. This means that the hyperparameters were chosen to minimize the error on the nine 5-minute intervals. For all models, the number of hidden neurons was chosen from $\{10, 50, 100\}$ and for the learning rate, the options were $\{0.100, 0.050, 0.010, 0.005, 0.001\}$. All models were trained for 2\,000 epochs using the Adam optimizer \citep{kingmaB15}.
    Additionally, we considered values in $\{0, 5\}$ for the initial value of the forget gate bias in \ac{lstm}. For \ac{mclstm}, the extra hyperparameters were the initial cell state value ($\in \{0, 100\}$) --- i.e., how much cars are in each memory cell at timestep zero --- and whether or not the initial cell state should be trained via backpropagation. The results of the hyperparameter search can be found in Tab.~\ref{tab:traffic_hyperparam}.
    
    \begin{table*}
        \caption{The hyperparameters resulting from the grid search for the traffic forecast experiment.}
        \label{tab:traffic_hyperparam}
        \begin{center}\begin{tabular}{crccccc}
            \toprule
             & & hidden & lr & forget bias & initial state & learnable state \\
             \midrule
             \multirow{2}{*}{Berlin} & \acs{lstm} & 10 & 0.01 & 0 & -- & -- \\
             & \acs{mclstm} & 100 & 0.01 & -- & 0 & True \\
             \midrule
             \multirow{2}{*}{Istanbul} & \acs{lstm} & 100 & 0.005 & 5 & -- & -- \\
             & \acs{mclstm} & 50 & 0.01 & -- & 0 & False \\
             \midrule 
             \multirow{2}{*}{Moscow} & \acs{lstm} & 50 & 0.001 & 5 & -- & -- \\
             & \acs{mclstm} & 10 & 0.01 & -- & 0 & False \\
             \bottomrule
        \end{tabular}\end{center}
    \end{table*}
    
    The idea behind tuning the initial cell state, is that unlike with \ac{lstm}, the cell state in \ac{mclstm} directly reflects the number of cars that can drive out of a city during the first timesteps. If the initial cell state is too high or too low, this might negatively affect the prediction capabilities of the model. If it would be possible to estimate the number of cars in a city at the start of the sequence, this could also be used to get better estimates for the initial cell state. However, from the results of the hyperparameter search (see Tab.~\ref{tab:traffic_hyperparam}), we might have overestimated the importance of these hyperparameters.

\paragraph{Results.}
    All models were evaluated on the test data, using the checkpoint after 2\,000 epochs for fifty runs. An example of what the predictions of both models look like for an arbitrary day in an arbitrarily chosen city is displayed in Fig.~\ref{fig:traffic}. The average \ac{rmse} and \ac{mae} are summarized in Tab.~\ref{tab:traffic4cast}. The results show that \ac{mclstm} is able to generalize significantly better than \ac{lstm} for this task. The \ac{rmse} of \ac{mclstm} is significantly better than 
 \ac{lstm} ($p$-values $4\mathrm{e}{-10}$, $8\mathrm{e}{-3}$, and $4\mathrm{e}{-10}$
    for Istanbul, Berlin, and Moscow, respectively, Wilcoxon test).


\begin{figure}
    \centering
    \includegraphics[width=0.48\textwidth]{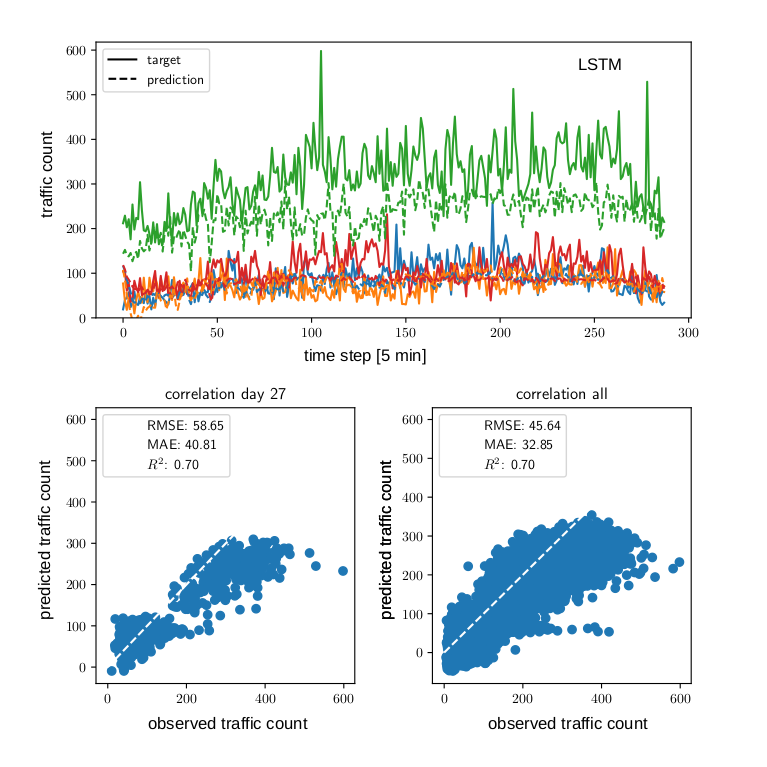}
    \includegraphics[width=0.48\textwidth]{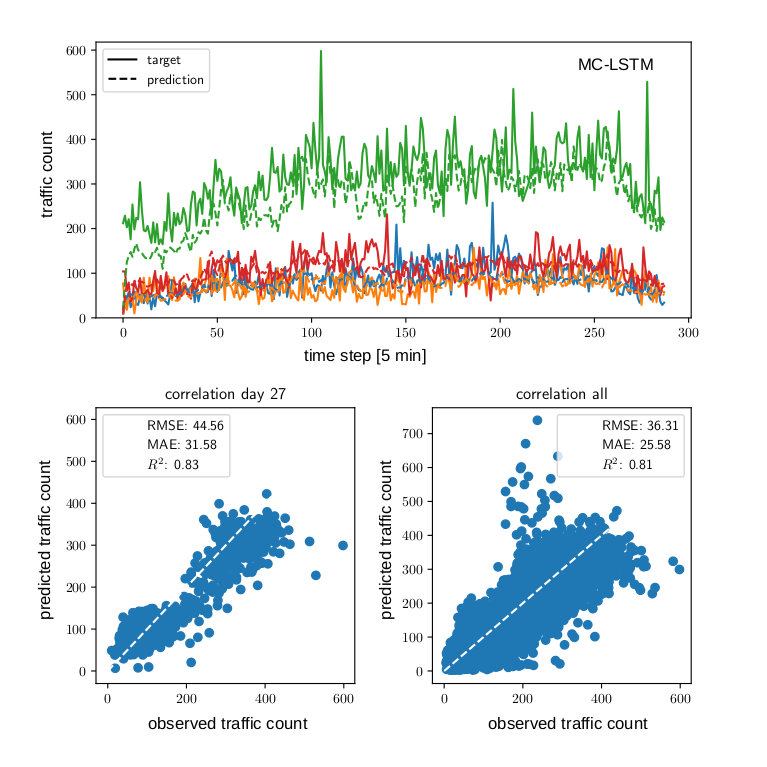}
    \caption{Traffic forecasting models for outbound traffic in Moscow. 
    An arbitrary day has been chosen for display. Note that both models have 
    only been trained on data at timesteps 71-73, 143-145, and 215-217. 
    Colors indicate the four borders of the frame, i.e., north, east, south and west. \textbf{Left:} \ac{lstm} predictions shown in dashed lines versus 
    the actual traffic counts (solid lines).
    \textbf{Right:} \ac{mclstm} predictions shown in dashed lines versus 
    the actual traffic counts (solid lines). }
    \label{fig:traffic}
\end{figure}

\begin{table*}
    \caption{Results on outbound traffic forecast avg \ac{rmse} and \ac{mae} with 95\% confidence intervals over 50 runs}
    \label{tab:traffic4cast}
    \begin{center}\begin{threeparttable}
        \begin{tabular}{rr@{$\ \pm\ $}lr@{$\ \pm\ $}lr@{$\ \pm\ $}lr@{$\ \pm\ $}lr@{$\ \pm\ $}lr@{$\ \pm\ $}l}
            \toprule
             & \multicolumn{4}{c}{Istanbul} & \multicolumn{4}{c}{Berlin} & \multicolumn{4}{c}{Moscow} \\
             \cmidrule{2-13}
             & \multicolumn{2}{c}{\acs{rmse}} & \multicolumn{2}{c}{\acs{mae}} & \multicolumn{2}{c}{\acs{rmse}} & \multicolumn{2}{c}{\acs{mae}} & \multicolumn{2}{c}{\acs{rmse}} & \multicolumn{2}{c}{\acs{mae}} \\
            \midrule
            \acs{mclstm} & \textbf{7.3} & 0.1 & \textbf{28} & 2 & \textbf{13.6} & 1.8 & \textbf{66} & 1 & 25.5 & 1.1          & \textbf{27.8} & 1.1 \\
            \acs{lstm}   & 142.6 & 4.4 &
            84 & 3 & 135.4 & 5.0 & 84 & 3 & 45.6 & 0.8 & 31.7 & 0.5 \\
            \bottomrule
        \end{tabular}
    \end{threeparttable}\end{center}
\end{table*}

\subsection{Damped Pendulum}
\label{app:addons:pendulum}

In the area of physics, we consider the problem of modeling a swinging pendulum
with friction. 
The conserved quantity of interest is the total energy.  During the movement of the pendulum, kinetic energy is converted into potential energy and vice-versa.
Neglecting friction, the total energy
is conserved and the movement would continue 
indefinitely. 
Accounting for friction, energy dissipates and 
the swinging slows over time until a fixed point is reached. This type of behavior presents a difficulty for machine learning and is impossible for 
methods that assume the pendulum to be closed systems, such as \acp{hnn} \citep{greydanusDY19}.
We postulated that both energy conversion and dissipation can be fitted by machine learning models, but that an appropriate inductive bias will allow to generalize from the learned data with more ease.   

To train the model, we generated a set of timeseries using 
the differential equations for a 
pendulum with friction. For small angles, this problem is equivalent to
the harmonic oscillator and an analytic solution exists with which 
we can compare the models \citep{iten2020discovering}. 
We used multiple different settings for initial angle, length of the pendulum, 
the amount of friction, the length of the training-period and with and without Gaussian noise. Each model received the initial 
kinetic and potential energy of the pendulum and must predict the consecutive timesteps. The time series starts always with the pendulum at the maximum displacement --- i.e., the entire energy in the system is potential energy.
We generated timeseries of potential- and kinetic energies by iterating the following settings/conditions: 
initial amplitude ($\{0.2,0.4\}$),  
pendulum length ($\{0.75, 1\}$),  
length of training sequence in terms of timesteps ($\{100, 200,400\}$),  
noise level ($\{0,0.01\}$),  
and dampening constant ($\{0.0,0.1, 0.2, 0.4,0.8\}$). 
All combinations of those settings were
used to generate a total of $120$ datasets, for which we train both models (the autoregressive \ac{lstm} and \ac{mclstm}).

We trained an autoregressive \ac{lstm} that  
receives its current state and a low-dimensional temporal 
embedding (using nine sinusoidal curves with 
different frequencies) 
to predict the potential and kinetic energy of the pendulum. 
Similarly, \ac{mclstm} is trained in an autoregressive mode, where 
a hypernetwork obtains the current state and the same temporal 
embedding as \ac{lstm}. The model-setup is thus similar to an 
autoregressive model with exogenous variables from classical timeseries modelling literature.
To obtain suitable hyperparameters we  manually adjusted the learning 
rate ($0.01$), hidden size of \ac{lstm} ($256$), the hypernetwork for estimating the redistribution (a fully connected network with 3 layers, ReLU activations and hidden sizes of 50, 100, and 2 respectively), optimizer \citep[Adam,][]{kingmaB15} and the training procedure (crucially, the amount of additionally considered timesteps in the loss after a threshold is reached. See explanation of the used loss below), on a separately generated validation dataset.

For \ac{mclstm}, a hidden size of two was used 
so that each state directly maps to the two energies. 
The hypernetwork consists of three fully connected layers of size 50, 100 
and 4, respectively. 
To account for the critical values at the 
extreme-points of the pendulum (i.e.\ the amplitudes --- where the energy is present only in the form of 
potential energy --- and the midpoint --- where only kinetic energy exists), we slightly offset the cell state from 
the actual predicted value by using a linear regression with a slope of $1.02$ and an intercept $-0.01$.

For both models, we used a combination of Pearson's correlation of 
the energy signals and the \ac{mse} as a loss function (by subtracting the former mean from the latter). Further, we used a simple curriculum to deal 
with the long autoregressive nature of the timeseries \citep{bengio2015scheduled}: 
Starting at a time window of eleven we added five additional 
timesteps whenever the combined loss was below $-0.9$.  


Overall, \ac{mclstm} has significantly outperformed \ac{lstm} with a mean \ac{mse} of $0.01$ (standard deviation $0.02$) compared to $0.07$ (standard deviation $0.14$; with a $p$-value $4.7\mathrm{e}{-10}$, Wilcoxon test).

\subsubsection{Qualitative Analysis of the \ac{mclstm} Models Trained for a Pendulum}
\label{app:Qualitative_analysis_pendulum}
In the following, we analyse the behavior of the simplest pendulum setup, i.e., the one without friction. Special to the problem of the pendulum without friction is that there are no mass in- or outputs and the whole dynamic of the system has to be modeled by the redistribution matrix. 
The initial state of the system is given by the displacement of the pendulum at the start, where all energy is stored as potential energy. 
Afterwards, the pendulum oscillates, converting potential to kinetic energy and vice-versa.

In \ac{mclstm}, the conversion between the two forms of energy has to be learned by the redistribution matrix. More specifically, the off-diagonal elements denote the fraction of energy that is converted from one form to the other. In contrast, the diagonal elements of the redistribution matrix denote the fraction of energy that is \emph{not} converted.

In Fig.~\ref{fig:pendulum_qualitative}, we visualize the off-diagonal elements of the redistribution matrix (i.e., the conversion of energy) for the pendulum task without friction, as well as the modeled potential and kinetic energy. We can see that an increasing fraction of energy is converted into the other form, until the total energy of the system is stored as either kinetic or potential energy. As soon as the total energy is e.g. converted into kinetic energy, the corresponding off-diagonal element (the orange line of the upper plot in Fig.~\ref{fig:pendulum_qualitative}) drops to zero. Here, the other off-diagonal element (the blue line of the upper plot in Fig.~\ref{fig:pendulum_qualitative}) starts to increase, meaning that energy is converted back from kinetic into potential energy. Note that the differences in the maximum values of the off-diagonal elements is not important, since at this point the corresponding energy is already approximately zero.

\begin{figure}
    \centering
    \includegraphics[width=0.9\linewidth]{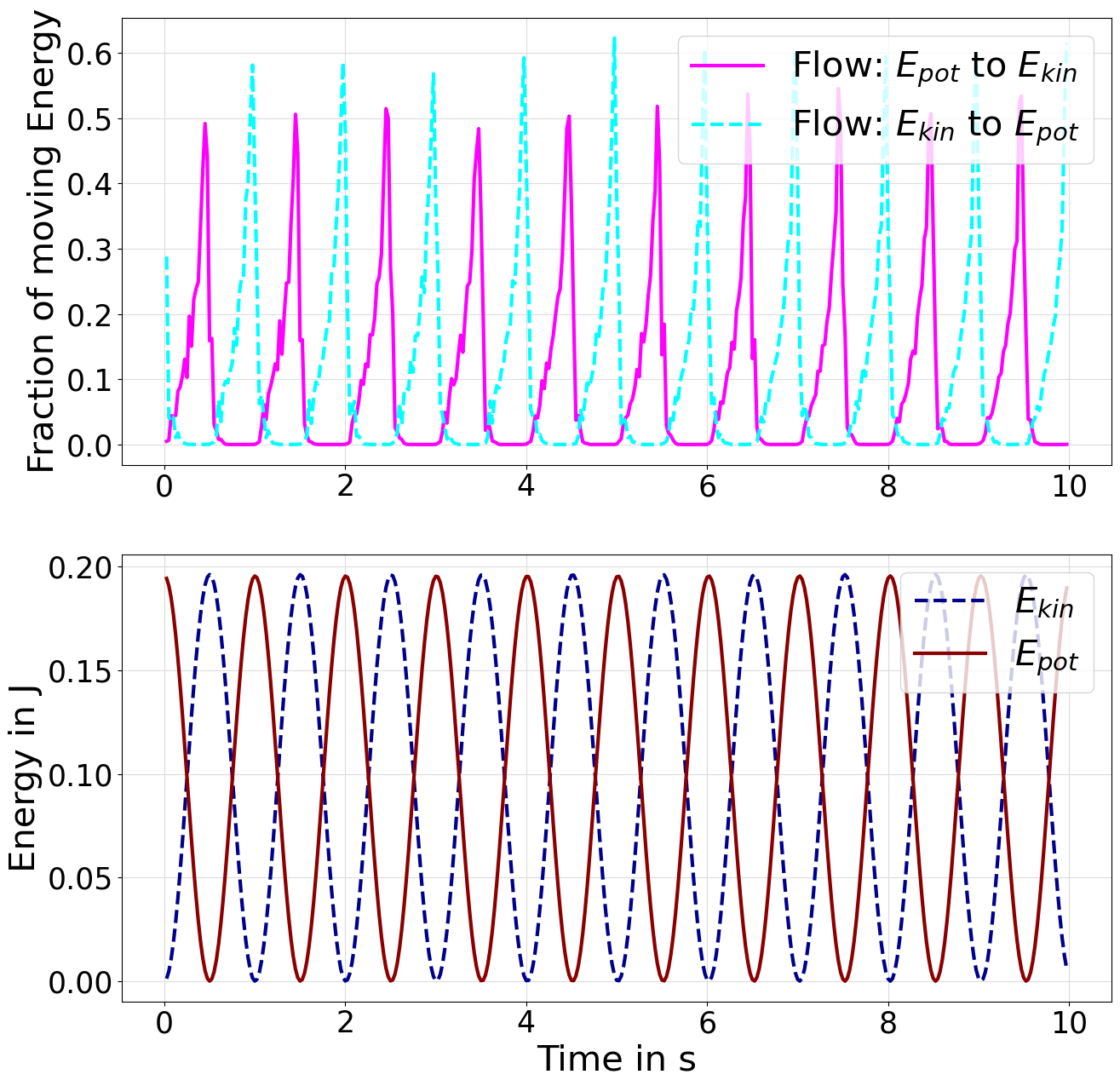}
    \caption{Redistribution of energies in a pendulum learned by MC-LSTM. The upper plot shows the fraction of energy that is redistributed between the two cells that model $E_{pot}$ and $E_{kin}$ over time. The continuous redistribution of energy results in the two time series of potential and kinetic energy displayed in the lower plot.}
    \label{fig:pendulum_qualitative}
\end{figure}


%

\subsubsection{Comparison with \aclp{hnn}}
\label{app:sec:hnn_comparison}
We aimed at a comparison with \ac{hnn} in the case
of the friction-free pendulum. To this end, we use the
data generation process by \citep{greydanusDY19}. We use amplitudes 
of $\{0.2, 0.3, 0.4, 1\}$, training sequence length $\{100, 200, 400\}$, 
and noise level $\{0,0.01\}$, which leads to 24 time-series.
We adhere to the \ac{hnn} reference implementation, which 
contains a gravity constant of $g = 6$ and mass $m = 0.5$.
In the case of the pendulum with friction, the assumptions of \acp{hnn} 
are not met which leads 
to problematic modeling behavior (see Figure~\ref{app:fig:hnn_friction}).

The \acp{hnn} directly predict the symplectic gradients that provide the 
dynamics for the pendulum. These gradients can then be integrated to obtain 
position and momentum for future timesteps. From these prediction, we 
compute the potential and kinetic energy over time.
For \ac{mclstm} we used the autoregressive version as described above
and used position and momentum, both rescaled to amplitude 1, 
as auxiliary inputs. 
Note that \acp{hnn} are feed-forward networks, and the dynamics are obtained by integrating over their predictions.
This implies that due to the periodicity of the data, 
the samples in the test set could also be in the training data.
Moreover, there is only noise on the input data, i.e., position and momentum, 
but not on the time derivatives, such that \acp{hnn} 
receive non-noisy labels. 
Therefore the training could be considered 
less noisy for \ac{hnn} compared to \acp{mclstm}.
The mean-squared error of the predictions for the  potential and kinetic energy
is compared against the analytic solution.
Concretely, the average MSE of \ac{mclstm} 
is $4.3\mathrm{e}{-4}$,  and the MSE of \acp{hnn} is $3.0\mathrm{e}{-4}$. 
On 11 out of 24 datasets, \ac{mclstm} outperformed \ac{hnn}, which indicates that there is no significant
difference between the two methods ($p$-value $0.84$, binomial test). 
 
\begin{figure}
    \centering
    \includegraphics[width=.9\linewidth]{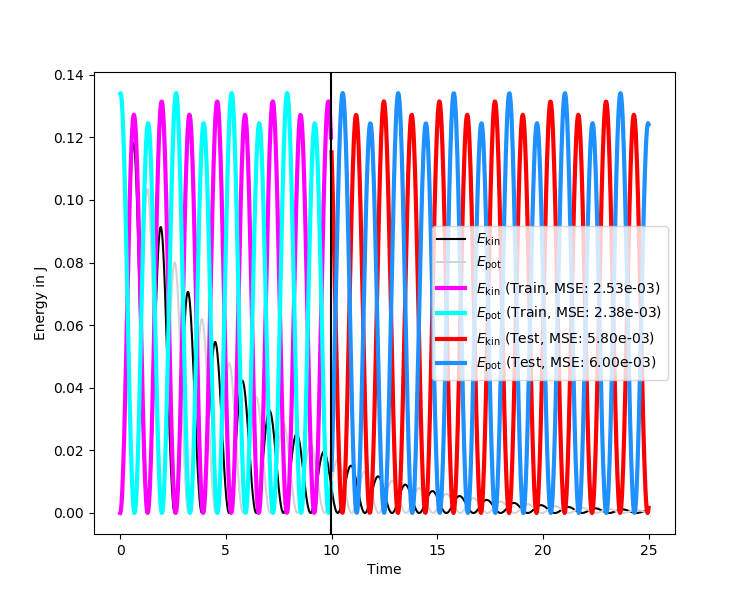}
    \caption{Example of modeling a pendulum with friction with a \ac{hnn}.
    \acp{hnn} assume a closed system and cannot model the  
    pendulum with friction, from which energy dissipates.}
    \label{app:fig:hnn_friction}
\end{figure}

\subsection{Hydrology}
\label{sec:experiments:hydrology}

Modeling river discharge from meteorological data (e.g., precipitation, temperature)
is one of the most important tasks in hydrology, and is necessary for water resource management and risk mitigation related to flooding.
Recently, \citet{kratzert2019universal, kratzert2020synergy} established \ac{lstm}-based models as 
state-of-the-art in rainfall runoff modeling, outperforming traditional hydrological models by a large 
margin against most metrics (including peak flows, which is critical for flood prediction).
However, the hydrology community is still reluctant to adopt these methods
\citep[e.g.][]{Beven2020comment}. 
A recent workshop on `Big Data and the Earth Sciences' \cite{sellars2018grand} reported that 
\textit{``[m]any participants who have worked in modeling physical-based systems continue to raise 
caution about the lack of physical understanding of ML methods that rely on data-driven approaches.”} 

One of of the most basic principles in watershed modeling is mass conservation. Whether water is treated as a resource (e.g.\ droughts) or hazard (e.g.\ floods), a modeller must be sure that they are accounting for all of the water in a catchment. 
Thus, most models conserve mass \citep{todini2008rainfallrunoff}, and attempt to explicitly implement the most important physical 
processes. 
The downside of this `model everything' strategy is that errors are introduced for every real-world process that is \textit{not} implemented in a model, or implemented incorrectly.
In contrast, \ac{mclstm} is able to learn any necessary behavior that 
can be induced from the signal (like \ac{lstm}) while still conserving the overall water budget.

\subsubsection{Details on the Dataset}

The data used in all hydrology related experiments is the publicly available Catchment Attributes and Meteorology for Large-sample Studies (CAMELS) dataset \citep{newman2014large, addor2017large}. CAMELS contains data for 671 basins and is curated by the US National Center for Atmospheric Research (NCAR). It contains only basins with relatively low anthropogenic influence (e.g., dams and reservoirs) and basin sizes range from 4 to 25\ 000 km\textsuperscript{2}. The basins cover a range of different geo- and eco-climatologies, as described by \citet{newman2015development} and \citet{addor2017camels}. Out of all 671 basins, we used 447 --- these are the basins for which simulations from all benchmark models are available (see Sec.~\ref{app:hyd:benchmark-models}). To reiterate, we used benchmark hydrology models that were trained and tested by other groups with experience using these models, and were therefore limited to the 447 basis with results for all benchmark models. The spatial distribution of the 447 basins across the contiguous USA (CONUS) is shown in Fig.~\ref{app:fig:catchments-map}.

\begin{figure*}
    \centering
    \includegraphics[width=0.8\textwidth]{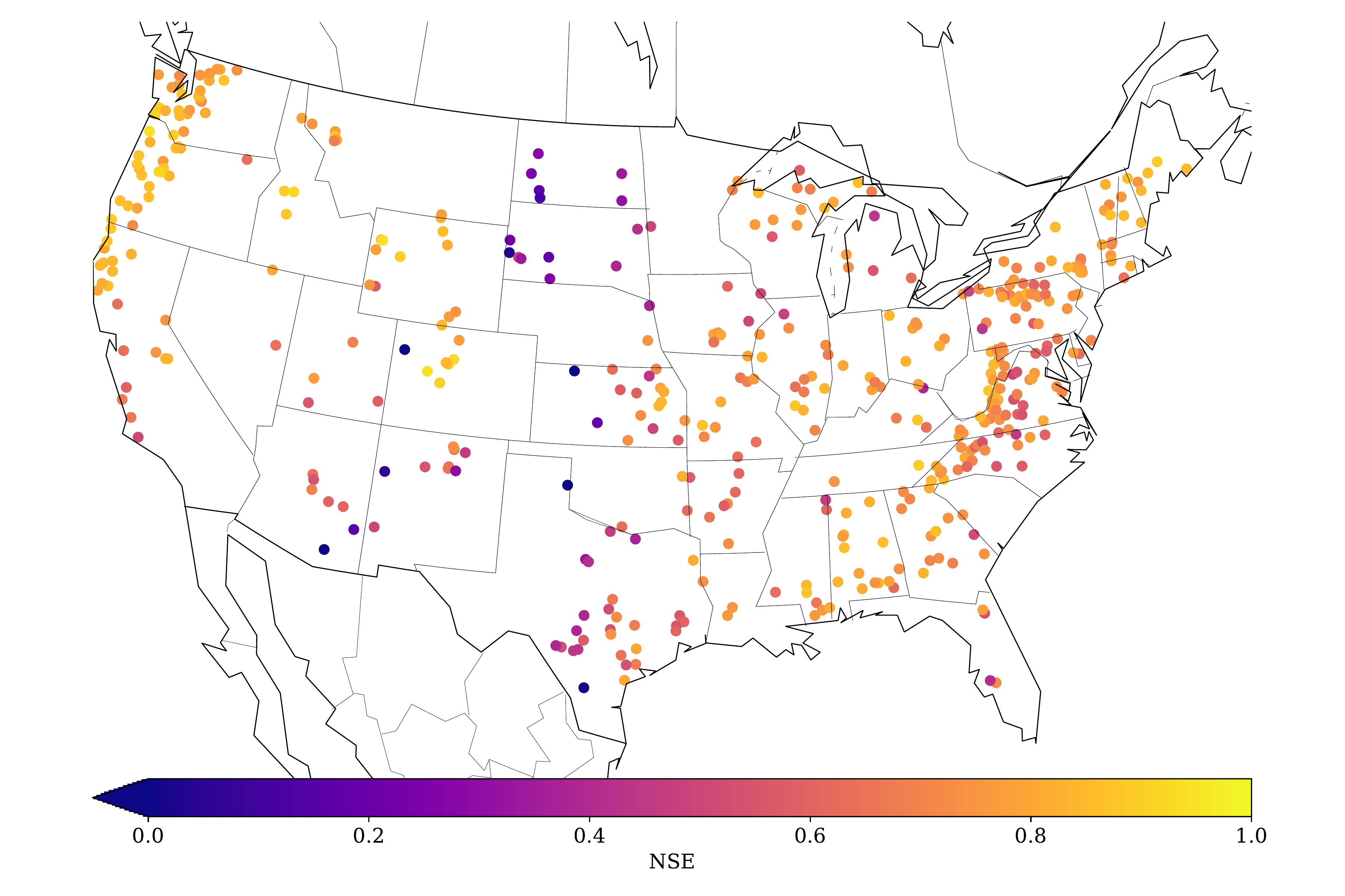}
    \caption{Spatial distribution of the 447 catchments considered in this study. The color denotes the Nash-Sutcliffe Efficiency of the MC-LSTM ensemble for each basin, where a value of 1 means perfect predictions.
    \label{app:fig:catchments-map}}
\end{figure*}

For each catchment, roughly 30 years of daily meteorological data from three different products exist (DayMet, Maurer, NLDAS). Each meteorological dataset consist of five different variables: daily cumulative precipitation, daily minimum and maximum temperature, average short-wave radiation and vapor pressure. We used the Maurer forcing data because this is the data product that was used by all benchmark models (see Sec.~\ref{app:hyd:benchmark-models}). In addition to meteorological data, CAMELS also includes a set of static catchment attributes derived from remote sensing or CONUS-wide available data products. The static catchment attributes can broadly be grouped into climatic, vegetation or hydrological indices, as well as soil and topological properties. In this study, we used the same 27 catchment attributes as \citet{kratzert2019universal}. Target data were daily averaged streamflow observations originally from the USGS streamflow gauge network, which are also included in the CAMELS dataset. 

\textbf{Training, validation and test set.} Following the calibration and test procedure of the benchmark hydrology models, we trained on streamflow observations from 1 October 1999 through 30 September 2008 and tested on observations from 1 October 1989 to 30 September 1999. The remaining period (1 October 1980 to 30 September 1989) was used as validation period for hyperparameter tuning.

\subsubsection{Details on the Training Setup and \ac{mclstm} Hyperparameters}
\label{app:hyd-training-setup}

The general model setup follows insights from previous studies \citep{kratzert2018rainfall, kratzert2019universal, kratzert2019toward, kratzert2020synergy}, where \acp{lstm} were used for the same task. We use sequences of 365 timesteps (days) of meteorological inputs to predict discharge at the last timestep of the sequence (sequence-to-one prediction). The mass input $\mass{x}$ in this experiment was catchment averaged precipitation (mm/day) and the auxiliary inputs $\aux{\vec{a}}$ were the 4 remaining meteorological variables (min. and max. temperature, short-wave radiation and vapor pressure) as well as the 27 static catchment attributes, which are constant over time.

We tested a variety of \ac{mclstm} model configurations and adaptions for this specific task, which are briefly described below: 

\begin{enumerate}
    \item \textbf{Processing auxiliary inputs with LSTM}: Instead of directly using the auxiliary inputs in the input gate (Eq.~\ref{eq:mclstm:in_gate}), output gate (Eq.~\ref{eq:mclstm:out_gate}) and time-dependent mass redistribution (Eq.~\ref{eq:R}), we first processed the auxiliary inputs \aux{a} with \ac{lstm} and then used the output of this \ac{lstm} as the auxiliary inputs. The idea was to add additional memory for the auxiliary inputs, since in its base form only mass can be stored in the cell states of \ac{mclstm}. This could be seen as a specific adaption for the rainfall runoff modeling application, since information about the weather today and in the past ought to be useful for controlling the gates and mass redistribution. Empirically however, we could not see any significant performance gain and therefore decided to not use the more complex version with an additional \ac{lstm}.
    \item \textbf{Auxiliary output + regularization to account for evapotranspiration}: Of all precipitation falling in a catchment, only a part ends as discharge in the river. Large portions of precipitation are lost to the atmosphere in form of evaporation (from e.g.\ open water surfaces) and transpiration (from e.g.\ plants and trees), and to groundwater. One approach to account for this ``mass loss'' is the following: instead of summing over outgoing mass (Eq.~\ref{eq:mclstm:out_update}), we used a linear layer to connect the outgoing mass to two output neurons. One neuron was fitted against the observed discharge data, while the second was used to estimate water loss due to unobserved sinks. A regularization term was added to the loss function to account for this. This regularization term was computed as the difference between the sum of the outgoing mass from  \ac{mclstm} and the sum over the two output neurons. This did work, and the timeseries of the second auxiliary output neuron gave interesting results (i.e.\ matching the expected behavior of the annual evapotranspiration cycle), however results were not significantly better compared to our final model setup, which is why we rejected this architectural change.
    \item \textbf{Explicit trash cell} Another way to account for evapotranspiration that we tested is to allow the model to use one memory cell as explicit ``trash cell''. That is, instead of deriving the final model prediction as the sum over the \textit{entire} outgoing mass vector, we only calculate the sum over all but e.g.\ one element (see Eq.~\ref{app:eq:hyd-trashcell}). This simple modification allows the model to use e.g.\ the first memory cell to discard mass from the system, which is then ignored for the model prediction. We found that this modification improved performance, and thus integrated it into our final model setup.
    \item \textbf{Input/output scaling to account for input/output uncertainty}: Both, input and output data in our applications inherit large uncertainties \citep{nearing2016uncertainty}, which is not ideal for mass-conserving models (and likely one of the reasons why \ac{lstm} performs so well compared to all other mass-conserving models). To account for that, we tried three different adaptions. First, we used a small fully connected network to derive time-dependent scaling weights for the mass input, which we regularized to be close to one. Second, we used a linear layer with positive weights to map the outgoing mass to the final model prediction, where all weights were initialized to one and the bias to zero. Third, we combined both. Out of the three, the input scaling resulted in the best performing model, however the results were worse than not scaling.
    \item \textbf{Time-dependent redistribution matrix variants}: For this experiment, a time-dependent redistribution matrix is necessary, since the underlying real-world processes (such as snow melt and thus conversion from snow into e.g.\ soil moisture or surface runoff) are time-dependent. Since using the redistribution matrix as proposed in Eq.~\ref{eq:R} is memory-demanding, especially for models with larger numbers of memory cells, we also tried to use a different method for this experiment. Here, we learned a fixed matrix (as in Eq.~\ref{eq:mclstm:redistribution}) and only calculated two vectors for each timestep. The final redistribution matrix was then derived as the outer product of the two time-dependent vectors and the static matrix. This resulted in lower memory consumption, however the model performance deteriorated significantly, which could be a hint toward the complexity required to learn the redistributing processes in this problem.
    \item \textbf{Activation function of the redistribution matrix}: We tested several different activation functions for the redistribution matrix in this experiment. Among those were the normalized sigmoid function, the softmax function (as in Eq.~\ref{eq:R}) and the normalized ReLU activation function (see Eq.~\ref{app:eq:hyd-relu}). We could achieve the best results using the normalized ReLU variant and can only hypothesize the reason for that: In this application (rainfall-runoff modelling) there are several state processes that are strictly disconnected. One example is snow and groundwater: groundwater will never turn into snow and snow will never transform into groundwater (not directly at least, it will first need to percolate through upper soil layers). Using normalized sigmoids or softmax makes it numerically harder (or impossible) to not distributed at least \textit{some} mass between every cell --- because activations can never be exactly zero. The normalized ReLU activation can do so, however, which might be the reason that it worked better in this case.
    \item \textbf{Activation function of the input gate}: Similar to the redistribution matrix, different activation functions can be used for the input gate. We tested the same three functions as for the redistribution matrix. For the input gate, the normalized sigmoid function resulted in the best performing model which was therefore used.
\end{enumerate}

As an extension to the standard \ac{mclstm} model introduced in Eq.~\eqref{eq:mclstm:in_gate} to 
Eq.~\eqref{eq:R},  we also used the mass input (precipitation) in all gates. The reason is the following: Different amounts of precipitations can lead to different processes. For example, low amounts of precipitation could be absorbed by the soil and stored as soil moisture, leading to effectively no immediate discharge contribution. Large amounts of precipitation on the other hand, could lead to direct surface runoff, if the water cannot infiltrate the soil at the rate of the precipitation falling down. Therefore, it is crucial that the gates have access to the information contained in the precipitation input. 
The final model design used in all hydrology experiments is described by the following equations:

\begin{align}
    \vec{m}_\mathrm{tot}\timestep{t}  &= \mat{R}\timestep{t} \cdot \vec{c}\timestep{t-1} + \vec{i}\timestep{t} \cdot \mass{x}\timestep{t} \\
    \vec{c}\timestep{t} &= (\vec{1} - \vec{o}\timestep{t}) \odot \vec{m}_\mathrm{tot}\timestep{t}  \\
    \vec{h}\timestep{t} &= \vec{o}\timestep{t} \odot \vec{m}_\mathrm{tot}\timestep{t} \\
    \label{app:eq:hyd-trashcell}
    \widehat{y} &= \sum_{i=2}^n h_i^t,
\end{align}

with the gates being defined by

\begin{align}
    \vec{i}\timestep{t} &= \normalised{\sigmoid} (\mat{W}_\mathrm{i} \cdot \aux{\vec{a}}\timestep{t} + \mat{U}_\mathrm{i} \cdot \frac{\vec{c}\timestep{t-1}}{\norm{\vec{c}\timestep{t-1}}_1} + \mat{V}_\mathrm{i} \cdot \mass{x}\timestep{t} + \vec{b}_\mathrm{i}) \\
    \vec{o}\timestep{t} &= \sigmoid(\mat{W}_\mathrm{o} \cdot \aux{\vec{a}}\timestep{t} + \mat{U}_\mathrm{o} \cdot \frac{\vec{c}\timestep{t-1}}{\norm{\vec{c}\timestep{t-1}}_1} + \mat{V}_\mathrm{o} \cdot \mass{x}\timestep{t} + \vec{b}_\mathrm{o}) \\
    \mat{R}\timestep{t} &= \widetilde{\relu} \left( \ten{W}_\mathrm{r} \cdot \aux{\vec{a}}\timestep{t} + \ten{U}_\mathrm{r} \cdot \frac{\vec{c}\timestep{t-1}}{\norm{\vec{c}\timestep{t-1}}_1}+ \ten{V}_\mathrm{r} \cdot \mass{x}\timestep{t} + \mat{B}_\mathrm{r} \right),
\end{align}

where $\normalised{\sigmoid}$ is the \emph{normalized logistic function}
and $\widetilde{\relu}$ is the \emph{normalized rectified linear unit} (ReLU) 
that we define in the following. 
The normalized logistic function defined of the input gate 
is defined by:

\begin{equation}
    \normalised{\sigmoid}(i_{k}) = \frac{\sigmoid(i_{k})}{\sum_k \sigmoid(i_{k})}.
\end{equation}

In this experiment, the activation function for the redistribution 
gate is the normalized ReLU function defined by:

\begin{equation}\label{app:eq:hyd-relu}
    \widetilde{\relu}(s_{k}) = \frac{\max(s_{k}, 0)}{\sum_k \max(s_{k}, 0)}, 
\end{equation}

where $\vec{s}$ is some input vector to the normalized ReLU function.

We manually tried different sets of hyperparameters, because a large-scale automatic hyperparameter search was not feasible. Besides trying out all variants as described above, the main hyperparameter that we tuned for the final model was the number of memory cells. For other parameters, such as learning rate, mini-batch size, number of training epochs, we relied on previous work using \acp{lstm} on the same dataset.

The final hyperparameters are a hidden size of 64 memory cells and a mini-batch size of 256. We used the Adam optimizer \citep{kingmaB15} with a scheduled learning rate starting at 0.01 then lowering the learning rate after 20 epochs to 0.005 and after another 5 epochs to 0.001. We trained the model for a total number of 30 epochs and used the weights of the last epoch for the final model evaluation. All weight matrices were initialized as (semi) orthogonal matrices \citep{saxe2013exact} and all bias terms with a constant value of zero. The only exception was the bias of the output gate, which we initialized to $-3$, to keep the output gate closed at the beginning of the training.

\subsubsection{Details on the Evaluation Metrics}

Table \ref{app-hydrology-metric-definition} lists the definition of all metrics used in the hydrology experiments as well as the corresponding references.

\begin{table*}
\caption{Definition of all metrics used in the hydrology experiments. The NSE is defined as the $R^2$ between simulated, $\hat{y}$, and observed, $y$, runoff and is listed for completion. FHV and FLV are both derived from the flow duration curve, which is a cumulative frequency curve of the discharge. $H$ for the FHV and $L$ for the FLV correspond to the 2\% highest flow and the 30\% lowest flow, respectively. \label{app-hydrology-metric-definition}}
\begin{center}\begin{threeparttable}
\begin{tabular}{lcc}
\toprule
Metric & Reference & Equation  \\
\midrule
Nash-Sutcliff-Efficiency (NSE)\textsuperscript{a} & \citet{nash1970river} & $1 - \frac{\sum_{t=1}^T (\widehat{y}\timestep{t} - y\timestep{t})^2}{\sum_{t=1}^T (y\timestep{t} - \bar{y})^2}$ \\
$\beta$-NSE Decomposition\textsuperscript{b} & \citet{gupta2009decomposition} & $(\mu_{\widehat{y}} - \mu_y) / \sigma_y$ \\
Top 2\% peak flow bias (FHV)\textsuperscript{c} & \citet{yilmaz2008} & $\frac{\sum_{h=1}^H (\widehat{y}_h - y_h)}{\sum_{h=1}^H y_h} \times 100$\\
30\% low flow bias (FLV)\textsuperscript{d} & \citet{yilmaz2008} & $\frac{\sum_{l=1}^L(\textup{log}(\widehat{y}_{l})-\textup{log}(\widehat{y}_{L})) - \sum_{l=1}^L(\textup{log}(y_{l})-\textup{log}(y_{L}))}{\sum_{l=1}^L(\textup{log}(y_{l})-\textup{log}(y_{L}))} \times 100$ \\
\bottomrule
\end{tabular}
\begin{tablenotes}[para]
\small{
\textsuperscript{a}: \textit{Nash-Sutcliffe efficiency: $(-\infty, 1]$, values closer to one are desirable.} \\
\textsuperscript{b}: \textit{$\beta$-NSE decomposition: $(-\infty, \infty)$, values closer to zero are desirable.}\\
\textsuperscript{c}: \textit{Top 2\% peak flow bias: $(-\infty, \infty)$, values closer to zero are desirable.} \\
\textsuperscript{d}: \textit{Bottom 30\% low flow bias: $(-\infty, \infty)$, values closer to zero are desirable.}\\
}
\end{tablenotes}
\end{threeparttable}\end{center}
\end{table*}

\subsubsection{Details on the LSTM Model}
For \ac{lstm}, we largely relied on expertise from previous studies \citep{kratzert2018rainfall, kratzert2019universal, kratzert2019toward, kratzert2020synergy}. The only hyperparameter we adapted was the number of memory cells, since we used fewer basins (447) than in the previous studies (531). We found that \ac{lstm} with 128 memory cells, compared to the 256 used in previous studies, resulted in slightly better results. Apart from that, we trained \acp{lstm} with the same inputs and settings (sequence-to-one with a sequence length of 365) as described in the previous section for \ac{mclstm}.
We used the standard \ac{lstm} implementation from the PyTorch package \citep{Paszke2019pytorch}, i.e., with forget gate \citep{gersSC00}. We manually initialized the bias of the forget gate to be 3 in order to keep the forget gate open at the beginning of the training.

\subsubsection{Details on the Benchmark Models}
\label{app:hyd:benchmark-models}

The benchmark models were first collected by \citet{kratzert2019universal}. All models were configured, trained and run by several different research groups, most often the respective model developers themselves. This was done to avoid any potential to favor our own models. All models used the same forcing data (Maurer) and the same time periods to train and test. The models can be classified in two groups:

\begin{enumerate}
    \item \textit{Models trained for individual watersheds}. These are SAC-SMA \citep{newman2017benchmarking}, VIC \citep{newman2017benchmarking}, three different model structures of FUSE\footnote{Provided by Nans Addor on personal communication}, mHM \citep{mizukami2019choice} and HBV \citep{seibert2018upper}. For the HBV model, two different simulations exist: First, the ensemble average of 1000 untrained HBV models (lower benchmark) and second, the ensemble average of 100 trained HBV models (upper benchmarks). For details see \citep{seibert2018upper}.
    \item \textit{Models trained regionally}. For hydrological models, regional training means that one parameter transfer model was trained, which estimates watershed-specific model parameters through globally trained model functions from e.g.\ soil maps or other catchment attributes. For this setting, the benchmark dataset includes simulations of the VIC model \citep{mizukami2017towards} and mHM \citep{rakovec2019diagnostic}.
\end{enumerate}

\subsubsection{Detailed Results}
Table~\ref{app:tab:hyd-mclstm-lstm-comparison} provides results for \ac{mclstm} and \ac{lstm} averaged over the $n=10$ model repetitions.

\begin{table*}
\caption{Model robustness of \ac{mclstm} and \ac{lstm} results over the $n=10$ different random seeds. For all $n=10$ models, we calculated the median performance for each metric and report the mean and standard deviation of the median values in this table.\label{app:tab:hyd-mclstm-lstm-comparison}}
\begin{center}\begin{threeparttable}
\begin{tabular}{lccccc}
\toprule
{} &         MC\textsuperscript{a} &       NSE\textsuperscript{b} &  $\beta$-NSE\textsuperscript{c} &        FLV\textsuperscript{d} &        FHV\textsuperscript{e} \\
\midrule
MC-LSTM Single   &  \ding{51} &  0.726$\pm$0.003 & -0.021$\pm$0.003 & -38.7$\pm$3.2 & -13.9$\pm$0.7 \\
LSTM Single      &  \ding{55} &  0.737$\pm$0.003 & -0.035$\pm$0.005 &  13.6$\pm$3.4 & -14.8$\pm$1.0 \\
\bottomrule
\end{tabular}
\begin{tablenotes}[para]
\small{
\textsuperscript{a}: \textit{Mass conservation (MC)}. \\
\textsuperscript{b}: \textit{Nash-Sutcliffe efficiency: $(-\infty, 1]$, values closer to one are desirable.} \\
\textsuperscript{c}: \textit{$\beta$-NSE decomposition: $(-\infty, \infty)$, values closer to zero are desirable.}\\
\textsuperscript{d}: \textit{Bottom 30\% low flow bias: $(-\infty, \infty)$, values closer to zero are desirable.}\\
\textsuperscript{e}: \textit{Top 2\% peak flow bias: $(-\infty, \infty)$, values closer to zero are desirable.}\\
}
\end{tablenotes}
\end{threeparttable}\end{center}
\end{table*}

\subsection{Ablation Study}
In order to demonstrate that the design choices of \ac{mclstm} are necessary together
to enable accurate predictive models, we performed an ablation study. In this study, we make the following changes to the input gate, the redistribution operation, and the output gate, to test if mass conservation in the individual parts is necessary.

\begin{enumerate}
    \item Input gate: We change the activation function of the input gate from a normalized sigmoid function to the standard sigmoid function, thus resulting in the input gate of a standard \ac{lstm}. Since the sigmoid function is bounded to $(0,1)$, the mass input $\mass{x}$ at every timestep $t$ that is added into the system can be scaled between $(0, n*\mass{x}\timestep{t})$.
    \item Redistribution matrix: We remove the normalized activation function from the redistribution matrix and instead use a linear activation function. This allows for unconstrained and scaled flow of mass from each memory cell into each other memory cell.
    \item Output gate: Instead of removing the outgoing mass ($\vec{o}\timestep{t} \odot \vec{m}_\mathrm{tot}\timestep{t}$) from the cell states at each timestep $t$, we leave the cell states unchanged and keep all mass within the system.
\end{enumerate}

We test these variants on data from the hydrology experiment. We chose 5 random basins to limit computational expenses and trained nine repetitions for each configuration and basin. The results are compared against the full mass-conserving
\ac{mclstm} architecture as described in App. \ref{app:hyd-training-setup}
and reported in Table~\ref{tab:ablation}. The results of
the ablation study indicate that the design of the input gate, redistribution matrix, 
and output gate, are necessary together for proficient predictive performance. The
strongest decrease in performance is observed if redistribution matrix does not conserve
mass, and smaller decreases if input or output gate do not conserve mass. 
We also tested a variant, where we used the softmax activation function in the input gate, instead of the normalized sigmoid that was used in the hydrology experiments (see Sec.~\ref{app:hyd-training-setup}). Both mass conserving variants, once with normalized sigmoid as activation function and once with softmax, achieve similar performance, while the variant with softmax is slightly better.
However, as stated in Sec.~\ref{app:hyd-training-setup}, we also tested this variant on the multi-basin version that we trained for the hydrology experiments. 
Here, the normalized sigmoid activation function resulted in better model performance.
This emphasizes that both variants are viable options and the exact design of the \acs{mclstm} might be task dependent.

\begin{table}
\caption{Ablation study results of the hydrology experiment. Models are trained for five, random basin with nine model repetitions. We computed the median over the repetitions and then the mean over the five basins. \label{tab:ablation}}
\begin{center}\begin{threeparttable}
\begin{tabular}{lcc}
\toprule
{} &         MC\textsuperscript{a} &       NSE\textsuperscript{b} \\
\midrule
\acs{mclstm}   &  \ding{51} &  $0.635 \pm 0.102$ \\
\acs{mclstm} w. softmax & \ding{51} & $0.650 \pm 0.095$ \\
\acs{mclstm} $-$ input     &  \ding{55} & $0.603 \pm 0.123$ \\
\acs{mclstm} $-$ output     &  \ding{55} & $0.55 \pm 0.097$ \\
\acs{mclstm} $-$ redis.\textsuperscript{c}     &  \ding{55} & $-4.229 \pm 8.982$ \\
\bottomrule
\end{tabular}
\begin{tablenotes}[para]
\small{
\textsuperscript{a}: \textit{Mass conservation (MC)}. \\
\textsuperscript{b}: \textit{Nash-Sutcliffe efficiency: $(-\infty, 1]$, values closer to one are desirable.}\\
\textsuperscript{c}: \textit{For one out of five basins, all nine model repetitions resulted into NaNs during training. Here, we report the statistics calculated from only the four successful basins.}  \\
}
\end{tablenotes}
\end{threeparttable}\end{center}
\end{table}

\subsection{Runtime}
\label{app:runtime}

Section~\ref{sec:theory} provides a comparison of \ac{mclstm} and \ac{lstm} in terms of computational complexity.
Since this comparison is rather abstract, we also conducted an empirical evaluation of the runtime.
The empirical runtimes of the forward pass for a single batch for both \ac{mclstm} and \ac{lstm} are listed in table~\ref{tab:runtime}.
Note that the backward pass should scale similarly to the forward pass.

\begin{table}
    \centering
    \begin{tabular}{r|ll}
        & CPU & GPU \\
        \hline \rule{0pt}{2.6ex}
        \ac{mclstm} & $951^{+1}_{-5}$ & $236^{+16}_{-1}$ \\
        \ac{lstm} & $205^{+8}_{-10}$ & $121^{+0}_{-0}$
    \end{tabular}
    \caption{Median runtime in ms of 5 forward passes with indication of 25 and 75\% quantiles. Timings were executed on a PC with AMD Ryzen 7 2700 CPU and Nvidia GTX 1070Ti GPU.}
    \label{tab:runtime}
\end{table}

We used the prototypical architecture for the hydrology experiments.
Concretely, both models received 1 mass input, 30 auxiliary inputs and had 64 hidden units.
A batch of 256 sequences was used, where each sequence has 365 timesteps.
To keep the comparison fair, we used a custom LSTM rather than the highly optimised default implementation that is available in pytorch \citep{Paszke2019pytorch}.


\section{Theorems \& Proofs}
\label{app:proof}
\setcounter{theorem}{0}

\begin{theorem}[Conservation property]
    Let $m_c\timestep{\tau} = \sum_k c_k\timestep{\tau}$ and $m_h\timestep{\tau} = \sum_k h_k\timestep{\tau}$ be, respectively, the \emph{mass} in the \ac{mclstm} storage and the outputs at time $\tau$. At any timestep $\tau$, we have:
    \begin{equation*}
        m_c\timestep{\tau} = m_c\timestep{0} + \sum_{t=1}^\tau \mass{x}\timestep{t}- \sum_{t = 1}^\tau m_h\timestep{t}.
    \end{equation*}
    That is, the change of mass in the cell states is the difference between input and output mass, accumulated over time.
\end{theorem}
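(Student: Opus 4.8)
The plan is to prove the identity by induction on $\tau$, as announced. The base case $\tau = 0$ is immediate: both sums in Eq.~\eqref{eq:massconv} are empty and the statement collapses to $m_c\timestep{0} = m_c\timestep{0}$. For the inductive step I would assume the identity at time $\tau - 1$ and reduce the entire argument to the single-step mass balance
\begin{equation*}
    m_c\timestep{\tau} = m_c\timestep{\tau-1} + \mass{x}\timestep{\tau} - m_h\timestep{\tau},
\end{equation*}
since substituting the hypothesis for $m_c\timestep{\tau-1}$ and merging the new terms into the sums then yields the claim at time $\tau$.

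First I would note that the complementary forget gate and the output gate partition the total mass without loss: summing Eq.~\eqref{eq:mclstm:cell_update} and Eq.~\eqref{eq:mclstm:out_update} gives $\vec{c}\timestep{\tau} + \vec{h}\timestep{\tau} = (\vec{1} - \vec{o}\timestep{\tau})\odot\vec{m}_\mathrm{tot}\timestep{\tau} + \vec{o}\timestep{\tau}\odot\vec{m}_\mathrm{tot}\timestep{\tau} = \vec{m}_\mathrm{tot}\timestep{\tau}$, independently of the actual value of $\vec{o}\timestep{\tau}$. Taking the inner product with the all-ones vector $\vec{1}$ and recalling the definitions of $m_c$ and $m_h$, this reads $m_c\timestep{\tau} + m_h\timestep{\tau} = \vec{1}^T \vec{m}_\mathrm{tot}\timestep{\tau}$.

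Next I would expand the right-hand side using Eq.~\eqref{eq:mclstm:mass} and invoke the two normalization constraints that the architecture enforces. Since $\mat{R}\timestep{\tau}$ is left-stochastic, $\vec{1}^T \mat{R}\timestep{\tau} = \vec{1}^T$, so the redistribution term contributes $\vec{1}^T \vec{c}\timestep{\tau-1} = m_c\timestep{\tau-1}$; and since the input gate is column-normalized through the softmax in Eq.~\eqref{eq:mclstm:in_gate}, $\vec{1}^T \vec{i}\timestep{\tau} = 1$, so the influx term contributes exactly $\mass{x}\timestep{\tau}$. Combining these gives $m_c\timestep{\tau} + m_h\timestep{\tau} = m_c\timestep{\tau-1} + \mass{x}\timestep{\tau}$, which rearranges into the single-step balance above.

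I do not anticipate a genuine obstacle: the whole content of the theorem is carried by the two stochasticity properties ($\vec{1}^T \mat{R}\timestep{\tau} = \vec{1}^T$ and $\vec{1}^T \vec{i}\timestep{\tau} = 1$) together with the exact gate partition $\vec{c}\timestep{\tau} + \vec{h}\timestep{\tau} = \vec{m}_\mathrm{tot}\timestep{\tau}$. The only point requiring mild care is the telescoping bookkeeping in the inductive step, namely ensuring that the extra terms $\mass{x}\timestep{\tau}$ and $m_h\timestep{\tau}$ are absorbed into the sums $\sum_{t=1}^{\tau}$ rather than being left dangling at index $\tau - 1$.
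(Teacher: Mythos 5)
Your proposal is correct and follows essentially the same route as the paper's proof: induction on $\tau$, with the single-step balance $m_c\timestep{\tau} = m_c\timestep{\tau-1} + \mass{x}\timestep{\tau} - m_h\timestep{\tau}$ obtained from the left-stochasticity of $\mat{R}\timestep{\tau}$, the normalization $\vec{1}^T \vec{i}\timestep{\tau} = 1$, and the fact that the gates $(\vec{1}-\vec{o}\timestep{\tau})$ and $\vec{o}\timestep{\tau}$ partition $\vec{m}_\mathrm{tot}\timestep{\tau}$. Your observation that $\vec{c}\timestep{\tau} + \vec{h}\timestep{\tau} = \vec{m}_\mathrm{tot}\timestep{\tau}$ is just a slightly more compact packaging of the paper's component-wise computation.
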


\begin{proof}
    The proof is by induction and we use  $\vec{m}_{\mathrm{tot}} = \mat{R}\timestep{t} \cdot \vec{c}\timestep{t-1} + \vec{i}\timestep{t} \cdot \mass{x}\timestep{t}$ from 
    Eq.\eqref{eq:mclstm:mass}.
    
    For $\tau = 0$, we have $m_c\timestep{0} = m_c\timestep{0} + \sum_{t = 1}^0 \mass{x}\timestep{t}- \sum_{t = 1}^0 m_h\timestep{t}$, which is trivially true when using the convention that $\sum_{t = 1}^0 = 0$.
    
    Assuming that the statement holds for $\tau = T$, we show that it must also hold for $\tau = T + 1$.
    
    Starting from Eq.~\eqref{eq:mclstm:cell_update}, the mass of the cell states at time $T + 1$ is given by:
    \begin{equation*}
        m_c\timestep{T+1}= \sum_{k=1}^K (1 - o_k) m_{\mathrm{tot},k}\timestep{T+1} 
        = \sum_{k=1}^K m_{\mathrm{tot},k}\timestep{T+1} - \sum_{k=1}^K o_k m_{\mathrm{tot},k}\timestep{T+1},
    \end{equation*}
    where $m_{\mathrm{tot}, k}\timestep{t}$ is the $k$-th entry of the result from Eq.~\eqref{eq:mclstm:mass} (at timestep $t$). The sum over entries in the first term can be simplified as follows:
    \begin{align*}
        \sum_{k=1}^K m_{\mathrm{tot},k}\timestep{T+1} &= \sum_{k=1}^K \left(\sum_{j=1}^K r_{kj} c_j\timestep{T} + i_k \mass{x} \timestep{T+1}\right) \\
        &= \sum_{j=1}^K c_j\timestep{T} \left(\sum_{k=1}^K r_{kj}\right) + \mass{x}\timestep{T+1}\sum_{k=1}^K i_k \\
        &= m_c\timestep{T} + \mass{x}\timestep{T+1}.
    \end{align*}
    The final simplification is possible because $\mat{R}$ and $\vec{i}$ are (left-)stochastic.
    The mass of the outputs can then be computed from Eq.~\eqref{eq:mclstm:out_update}:
    \begin{equation*}
        m_h\timestep{T+1}= \sum_{k=1}^K o_k m_{\mathrm{tot},k}\timestep{T+1}.
    \end{equation*}
    
    Putting everything together, we find
    \begin{align*}
        m_c\timestep{T+1}&= \sum_{k=1}^K m_{\mathrm{tot},k}\timestep{T+1} - \sum_{k=1}^K o_k m_{\mathrm{tot},k}\timestep{T+1} \\
        &= m_c\timestep{T} + \mass{x}\timestep{T+1}- m_h\timestep{T+1}\\
        &= m_c\timestep{0} + \sum_{t=1}^T \mass{x}\timestep{t}- \sum_{t=1}^T m_h\timestep{t}+ \mass{x}\timestep{T+1}- m_h\timestep{T+1}\\
        &= m_c\timestep{0} + \sum_{t=1}\timestep{T+1} \mass{x}\timestep{t}- \sum_{t=1}^{T+1} m_h\timestep{t}
    \end{align*}
    
    By the principle of induction, we conclude that mass is conserved, as specified in Eq.~\eqref{eq:massconv}.
\end{proof}

\begin{corollary}
    \label{cl:bound}
    In each timestep $\tau$, the cell states $c_k\timestep{\tau}$
    are bounded by the sum of mass inputs $\sum_{t=1}^\tau \mass{x}\timestep{\tau} + m_c\timestep{0}$, that 
    is $|c_k\timestep{\tau}| \leq \sum_{t=1}^\tau \mass{x}\timestep{\tau} + m_c\timestep{0}$. Furthermore, 
    if the series of mass inputs converges $\lim_{\tau \rightarrow \infty} \sum_{t=1}^\tau \mass{x}\timestep{\tau} = m_x^\infty$, then also the sum of cell states 
    converges.
\end{corollary}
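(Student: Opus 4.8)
The plan is to derive both claims directly from the conservation identity in Theorem~\ref{thm:conservation}, combined with the nonnegativity of every quantity that appears. First I would establish that the cell states remain nonnegative for all $\tau$, assuming $\vec{c}\timestep{0} \geq \vec{0}$ as befits a mass. This follows by induction on $\tau$: since $\mat{R}\timestep{t}$ is a positive left-stochastic matrix, $\vec{i}\timestep{t}$ arises from a softmax, $\mass{x}\timestep{t} \geq 0$, and $\vec{1} - \vec{o}\timestep{t} \in (0,1)^K$, the recurrences in Eq.~\eqref{eq:mclstm:mass} and Eq.~\eqref{eq:mclstm:cell_update} map a nonnegative $\vec{c}\timestep{t-1}$ first to a nonnegative $\vec{m}_\mathrm{tot}\timestep{t}$ and then to a nonnegative $\vec{c}\timestep{t}$. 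The same observation, applied to Eq.~\eqref{eq:mclstm:out_update}, shows each output $h_k\timestep{t} = o_k\timestep{t}\, m_{\mathrm{tot},k}\timestep{t} \geq 0$, hence $m_h\timestep{t} \geq 0$.

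With nonnegativity in hand, the boundedness claim is immediate. Since every $c_k\timestep{\tau} \geq 0$, a single coordinate is dominated by the total, $|c_k\timestep{\tau}| = c_k\timestep{\tau} \leq \sum_{j} c_j\timestep{\tau} = m_c\timestep{\tau}$. Theorem~\ref{thm:conservation} then gives $m_c\timestep{\tau} = m_c\timestep{0} + \sum_{t=1}^\tau \mass{x}\timestep{t} - \sum_{t=1}^\tau m_h\timestep{t} \leq m_c\timestep{0} + \sum_{t=1}^\tau \mass{x}\timestep{t}$, where the inequality simply discards the nonnegative efflux sum. Chaining these two bounds yields exactly the stated estimate.

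For the convergence claim I would invoke the monotone convergence theorem. The partial sums $S_\tau = \sum_{t=1}^\tau m_h\timestep{t}$ are nondecreasing in $\tau$, because each $m_h\timestep{t} \geq 0$. Rearranging the conservation identity and using $m_c\timestep{\tau} \geq 0$ gives $S_\tau = m_c\timestep{0} + \sum_{t=1}^\tau \mass{x}\timestep{t} - m_c\timestep{\tau} \leq m_c\timestep{0} + m_x^\infty$, so $S_\tau$ is bounded above by the assumed limit of the input series. A nondecreasing sequence bounded above converges; call its limit $m_h^\infty$. Passing to the limit in the conservation identity then yields $\lim_{\tau \to \infty} m_c\timestep{\tau} = m_c\timestep{0} + m_x^\infty - m_h^\infty$, which is the desired convergence of the sum of cell states.

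The only genuine subtlety, and the step I would be most careful about, is the nonnegativity induction; everything else is a one-line consequence of Theorem~\ref{thm:conservation}. In particular, bounding an \emph{individual} coordinate $c_k\timestep{\tau}$ rather than the aggregate $m_c\timestep{\tau}$ relies essentially on nonnegativity, both to replace $|c_k\timestep{\tau}|$ by $c_k\timestep{\tau}$ and to dominate one coordinate by the sum over all of them; without it the coordinatewise bound would fail. I would therefore make the standing assumptions $\vec{c}\timestep{0} \geq \vec{0}$ and $\mass{x}\timestep{t} \geq 0$ explicit at the outset.
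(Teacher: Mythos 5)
Your proof is correct and follows essentially the same route as the paper: nonnegativity of the cell states, outputs, and mass inputs gives $|c_k\timestep{\tau}| = c_k\timestep{\tau} \leq m_c\timestep{\tau}$, Theorem~\ref{thm:conservation} with the efflux sum discarded gives the bound, and convergence follows from the bounded monotone series of effluxes (what the paper calls the comparison test). The only difference is that you spell out the nonnegativity induction and the limit-passing explicitly, which the paper merely asserts.
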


\begin{proof}
    Since $c_k \timestep{t} \geq 0$, $\mass{x}\timestep{t} \geq 0$ and $m_h\timestep{t} \geq 0$ 
    for all $k$ and $t$,
    \begin{align}
     |c_k\timestep{\tau}| = c_k\timestep{\tau} \leq \sum_{k=1}^K c_k\timestep{\tau} = 
     m_c\timestep{\tau} \leq \sum_{t=1}^\tau \mass{x}\timestep{\tau} + m_c\timestep{0}, 
    \end{align}
    
    where we used Theorem~1. Convergence follows immediately through 
    the \emph{comparison test}. 
\end{proof}

\section{On Random Markov Matrices.}
\label{sec:circular}
When initializing an \ac{mclstm} model, the entries of 
the redistribution matrix $\mat{R}$ of dimension $K\times K$ 
are created from non-negative and iid random variables $(s_{ij})_{1 \le i,j \le K}$ with finite means $m$ and variances $\sigma^2$ and bounded 
fourth moments. We collect them in a matrix $\mat{S}$. Next we assume that those entries 
get column-normalized to obtain the random Markov matrix $\mat{R}$. 

\paragraph{Properties of Markov matrices and random Markov matrices.}
Let $\lambda_1,\ldots,\lambda_K$ be the eigenvalues and 
$s_1,\ldots, s_K$ be the singular values of $\mat{R}$, ordered such that
$|\lambda_1| \geq \ldots \geq |\lambda_K|$ and $s_1 \geq \ldots \geq s_k$. 
We then have the following properties for any Markov matrix (not necessarily random):

\begin{itemize}
    \item $\lambda_1=1$. 
    \item $\vec{1}^T \mat{R}=\vec{1}^T$.
    \item $s_1=\norm{\mat{R}}_2 \leq \sqrt{K}$.
    \end{itemize}
    Furthermore, for random Markov matrices, we have
\begin{itemize}
    \item $\lim_{K \rightarrow \infty} s_1 = 1$ \citep[Theorem 1.2]{bordenave10circular}
\end{itemize}
For the reader's convenience we briefly discuss further selected interesting properties of random Markov matrices in the next paragraph, especially concerning the global behavior of their eigenvalues and singular values.

\paragraph{Circular and Quartercircular law for random Markov matrices.}
In random matrix theory one major field of interest concerns the behavior of eigenvalues and singular values when $K \to \infty$. One would like to find out how the limiting distribution of the eigenvalues or singular values looks like. To discuss the most important results in this direction for large Markov matrices $\mat{R}$, let us introduce some notation. 
\begin{itemize}
\item $\delta_a$ denotes the Dirac delta measure centered at $a$. \item By
$
    \mu_{\mat{R}}=\frac{1}{K} \sum_{k=1}^K \delta_{\lambda_k}
$
we denote the empirical spectral density of the eigenvalues of $\mat{R}$. 
\item Similarly we define the empirical spectral density of the singular values of $\mat{R}$ as:
$
    \nu_{\mat{R}}=\frac{1}{K} \sum_{k=1}^K \delta_{s_k}.
$
\item $\mathcal{Q}_{\sigma}$ denotes the quartercircular distribution on the interval $[0,\sigma]$ and 
\item $\mathcal{U}_{\sigma}$ the uniform distribution on the disk $\{ z \in \mathbb{C}: |z| \le \sigma \}$.
\end{itemize}
Then we have as $K \to \infty$:
\begin{itemize}
    \item \textbf{Quarter cirular law theorem:} \citep[Theorem 1.1]{bordenave10circular}: $ \nu_{\sqrt{K} \mat{R}} \to \mathcal{Q}_{\sigma}$ almost surely.
    \item \textbf{Cirular law theorem:} \citep[Theorem 1.3]{bordenave10circular}: $\nu_{\sqrt{K} \mat{R}} \to \mathcal{U}_{\sigma}$ almost surely.
\end{itemize}
The convergence here is understood in the sense of weak convergence of probability measures with respect
to bounded continuous functions. Note that those two famous theorems originally appeared for $\frac{1}{\sqrt{K}}\mat{S}$ instead of $\sqrt{K}\mat{R}$. Of course much more details on those results can be found in \citet{bordenave10circular}.

\paragraph{Gradient flow of \ac{mclstm} for random redistributions.}
Here we provide a short note on the gradient dynamics of the cell state
in a random \ac{mclstm}, hence, at initialization of the model. 
Specifically we want to provide some heuristics based on the arguments about the 
behavior of large stochastic matrices.
Let us start by recalling the formula for $\vec{c}\timestep{t}$:
\begin{align}
    \vec{c}\timestep{t} &= (\vec{1} - \vec{o}\timestep{t}) \odot (\mat{R}\timestep{t} \cdot \vec{c}\timestep{t-1} + \vec{i}\timestep{t} \cdot \mass{x}\timestep{t}).
\end{align}

Now we investigate the gradient of $\norm{\frac{\partial \vec{c}\timestep{t}}{\partial \vec{c}\timestep{t-1}}}_2$ in the limit $K \to \infty$. We assume that for $K \to \infty$,  $\vec{o}\timestep{t} \approx \vec{0}$ and $\vec{i}\timestep{t} \approx \vec{0}$ for all $t$. 
Thus we approximately have:
\begin{align}
    \norm{\frac{\partial \vec{c}\timestep{t}}{\partial \vec{c}\timestep{t-1}}}_2 \approx \norm{\mat{R}\timestep{t}}_2.
\end{align}
$\mat{R}\timestep{t}$ is a stochastic matrix, and $s_1 = \norm{\mat{R}\timestep{t}}_2$ is its largest singular value. Theorem 1.2 from \citet{bordenave10circular} ensures that $\norm{\mat{R}\timestep{t}}_2=1$ for $K \to \infty$ under reasonable moment assumptions on the distribution of the unnormalized entries (see above). 
Thus we are able to conclude $\norm{\frac{\partial \vec{c}\timestep{t}}{\partial \vec{c}\timestep{t-1}}}_2 \approx 1$  for large $K$ and all $t$, which can 
prevent the gradients from exploding.


\end{document}